\def\eqref#1{equation~\ref{#1}}
\def\1{\bm{1}}
\DeclareMathAlphabet{\mathsfit}{\encodingdefault}{\sfdefault}{m}{sl}
\SetMathAlphabet{\mathsfit}{bold}{\encodingdefault}{\sfdefault}{bx}{n}
\newcommand{\E}{\mathbb{E}}
\newtheorem*{theorem*}{Theorem}
\newtheorem*{lemma*}{Lemma}
\newtheorem*{definition*}{Definition}
\newcites{app}{References}
\DeclarePairedDelimiter\inprod{\langle}{\rangle}
\newcommand{\Over}{ {\textnormal{Ov}} }
\newcommand{\Vol}{ {\textnormal{Vol}} }
\newcommand\newadjustbox[2][0.85]{
    \adjustbox{max width=#1\linewidth}{$\displaystyle #2$}}
\newtcbox{\mymath}[1][]{%
    nobeforeafter, math upper, tcbox raise base,
    enhanced, colframe=blue!30!black,
    colback=blue!30, boxrule=1pt,
    #1}
\newcommand{\Mean}[1]{{\mathbb E}\left[{#1}\right]}
\newcommand{\mycomment}[1]{}
\newcommand*\bigcdot{\mathpalette\bigcdot@{.5}}
\newcommand*\bigcdot@[2]{\mathbin{\vcenter{\hbox{\scalebox{#2}{$\m@th#1\bullet$}}}}}
\theoremstyle{plain}
\newtheorem{theorem}     {Theorem}
\newtheorem{lemma}       {Lemma}
\newtheorem{definition}  {Definition}
\newtheorem{observation}  {Observation}
\newtheorem*{problem*}     {Problem} 
\newcommand{\squishlist}{
 \begin{list}{$\bullet$}
  {  \setlength{\itemsep}{0pt}
     \setlength{\parsep}{3pt}
     \setlength{\topsep}{3pt}
     \setlength{\partopsep}{0pt}
     \setlength{\leftmargin}{2em}
     \setlength{\labelwidth}{1.5em}
     \setlength{\labelsep}{0.5em}
} }
\newcommand{\squishlisttight}{
 \begin{list}{$\bullet$}
  { \setlength{\itemsep}{0pt}
    \setlength{\parsep}{0pt}
    \setlength{\topsep}{0pt}
    \setlength{\partopsep}{0pt}
    \setlength{\leftmargin}{2em}
    \setlength{\labelwidth}{1.5em}
    \setlength{\labelsep}{0.5em}
} }
\newcommand{\squishdesc}{
 \begin{list}{}
  {  \setlength{\itemsep}{0pt}
     \setlength{\parsep}{3pt}
     \setlength{\topsep}{3pt}
     \setlength{\partopsep}{0pt}
     \setlength{\leftmargin}{1em}
     \setlength{\labelwidth}{1.5em}
     \setlength{\labelsep}{0.5em}
} }
\newcommand{\squishend}{
  \end{list}
}
\title{On the Role of Edge Dependency in \\ Graph Generative Models}
\author{
    Sudhanshu Chanpuriya\\ University of Illinois Urbana-Champaign\\\texttt{schariya@illinois.edu}
    \And 
    Cameron Musco\\University of Massachusetts Amherst\\\texttt{cmusco@cs.umass.edu}
    \AND
    Konstantinos Sotiropoulos\\Carnegie Mellon University\\\texttt{ksotirop@andrew.cmu.edu} 
    \And
    Charalampos E. Tsourakakis\\Boston University\\\texttt{tsourolampis@gmail.com} 
}
\begin{document}

\maketitle

\begin{abstract}
In this work, we introduce a novel evaluation framework for generative models of graphs, emphasizing the importance of model-generated graph \emph{overlap}~\citep{chanpuriya2021power} to ensure both accuracy and edge-diversity. We delineate a hierarchy of graph generative models categorized into three levels of complexity: edge independent, node independent, and fully dependent models. This hierarchy encapsulates a wide range of prevalent methods. We derive  theoretical bounds on the number of triangles and other short-length cycles producible by each level of the hierarchy, contingent on the model overlap. We provide instances demonstrating the asymptotic optimality of our bounds.  
Furthermore, we introduce new generative models for each of the three hierarchical levels, leveraging dense subgraph discovery~\citep{gionis2015dense}. Our evaluation, conducted on real-world datasets, focuses on assessing the output quality and overlap of our proposed models in comparison to other popular models. Our results indicate that our simple, interpretable models provide competitive baselines to popular generative models. Through this investigation, we aim to propel the advancement of graph generative models by offering a structured framework and robust evaluation metrics, thereby facilitating the development of models capable of generating accurate and edge-diverse graphs.

\end{abstract}

\section{Introduction}
\label{sec:intro}

Mathematicians and physicists have long studied random discrete structures, such as sets, permutations~\citep{kolchin1975cyclic}, and graphs~\citep{ErdosRenyi:1960}. In recent decades, 
random graphs (also known as graph generative models) have become increasingly important for modeling and understanding complex networks across a wide range of domains
~\citep{easley2010networks}. Random graph models provide a way to capture the uncertainty that is inherent in many real-world networks. For example, in the social sciences, random graphs have been used to study the spread of information and disease~\citep{keliger2023switchover} and the evolution of social networks~\citep{newman2002random}. In economics, random graphs have been used to study the behavior of financial markets and the spread of economic shocks~\citep{jackson2008social}. In physics, random graphs have been used to study the behavior of disordered materials, the spread of traffic jams, and the emergence of synchronization in complex systems~\citep{dorogovtsev2003evolution}. In drug discovery, random graphs have been used to study the interaction between drugs and proteins, the design of new drugs, and the prediction of drug toxicity~\citep{pan2022deep}.
In bioinformatics, random graphs have been used to study the structure of DNA and proteins and brain networks~\citep{simpson2011exponential}.

Deep learning~\citep{lecun2015deep} has led to the development of new methods for fitting random graph models such as graph autoencoders~\citep{KipfWelling:2016,simonovsky2018graphvae}, NetGAN~\citep{bojchevski2018netgan}, GraphGAN~\citep{wang2018graphgan}, GraphRNN~\citep{YouYingRen:2018}, among many others.   Such methods are typically evaluated based on two key dimensions: efficiency and output quality. 
Output quality measures how closely the generated graphs resemble real-world graphs that they aim to model. A common way to measure output quality is to measure a model's tendency to produce graphs that have similar statistical properties to a target input graph or set of graphs. Common examples of statistics that are considered include the average degree, maximum degree, degree assortativity, motif counts, and shortest-path measures. 
In terms of motif counts  triangles play a major role. This is due to the fact that triangle density is a hallmark property of several types of networks, especially social networks, and plays an important role in network analysis, e.g., \cite{watts1998collective,yin2017local,tsourakakis2017scalable}.


\begin{wrapfigure}{R}{6.3cm}
\vspace{-20pt} 
\begin{center}
\includegraphics[width=6.0cm]{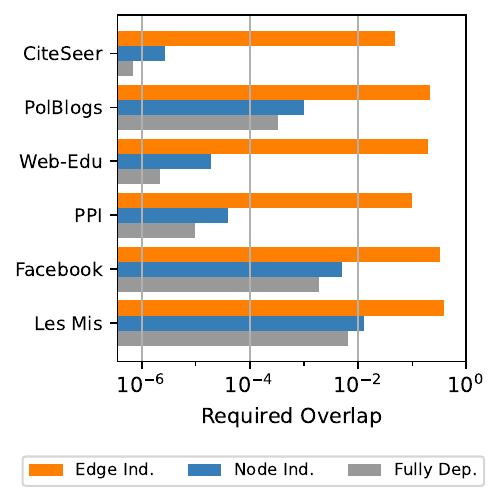}
\end{center}
\vspace{-10pt} 
\caption{\label{fig:overlap_limit} Minimum levels of overlap required to replicate some real world networks' triangle counts in expectation.}
\vspace{-10pt} 
\end{wrapfigure}

Despite the development of the aforementioned sophisticated algorithms for fitting graph generative models, \cite{chanpuriya2021power} highlight significant limitations of any  method that fits an \emph{edge independent} random graph model, where each edge is added independently with some (non-uniform) probability. 
Informally (see Section~\ref{sec:related} for details), {\it any} edge independent model is limited with respect to how many triangles it can generate, {\it unless} it memorizes a single target network.
The study's findings have implications for the use of neural-network-based graph generative models, such as NetGAN~\citep{bojchevski2018netgan,rendsburgnetgan}, Graphite~\citep{GroverZweigErmon:2019}, and MolGAN~\citep{De-CaoKipf:2018},  all of which fit edge independent models. 

   
The results of \cite{chanpuriya2021power} introduce the notion of {\it overlap}, to quantify the diversity of the graphs generated by a given model. Intuitively, useful models that produce diverse output graphs have bounded overlap. Further, without the requirement of bounded overlap, it is trivial to generate graphs that have the same statistical properties as a given input graph, by simply memorizing that graph and outputting it with probability $1$. Formally, \cite{chanpuriya2021power} show a trade-off between the number of triangles that an edge independent model can generate and the model overlap.

The notion of overlap will play a central role in our work. 
%
In particular, we introduce the following definition, which generalizes the concept of overlap beyond edge independent models. For simplicity, we focus on node-labeled, unweighted, and undirected graphs.

\begin{definition}[{\bf Overlap}]\label{def:overlap}
Suppose $\mathcal{A}$ is a distribution over binary adjacency matrices of undirected unweighted graphs on $n$ nodes without self-loops. Let the `volume' of $\mathcal{A}$ be the expected number of edges in a sample  $\Vol(\mathcal{A}) \coloneqq \E_{\mathbf{A} \sim \mathcal{A}} \left[ \sum\nolimits_{i<j} \mathbf{A}_{ij} \right] .$
The overlap of $\mathcal{A}$ is the expected number of shared edges between two independent samples, normalized by volume:
\[ \Over(\mathcal{A}) \coloneqq \frac{\E_{\mathbf{A},\mathbf{A'} \sim \mathcal{A}} \left[ \sum\nolimits_{i<j} \mathbf{A}_{ij} \cdot \mathbf{A'}_{ij} \right]}{ \Vol(\mathcal{A}) } . \]
\end{definition}


\noindent\textbf{Contributions.} Our key contributions are as follows:

\begin{itemize}[leftmargin=0.5\leftmargin]


\item We introduce a structured hierarchy of graph generative models categorized into three levels of complexity: edge independent, node independent, and fully dependent models. These levels encompass a wide array of today's prevalent methods as depicted in Table~\ref{tab:preview}.


\item We establish theoretical limits on the number of triangles and other short-length cycles that can be generated by each level of our hierarchy, based on the model overlap -- see Table \ref{tab:preview} for a summary. We also present instances demonstrating that our bounds are (asymptotically) optimal. Our results generalize those of \cite{chanpuriya2021power}, which only considered edge independent models: we show that models at higher levels of our hierarchy can achieve  higher triangle density given a fixed level of overlap. This finding matches empirical observations that such models can replicate triangle counts of a network with a much lower edge overlap -- see Figure~\ref{fig:overlap_limit}. 


\item We introduce new generative models for each of the three levels, grounded on dense subgraph discovery~\citep{gionis2015dense} through maximal clique detection~\citep{eppstein2010listing}. We assess our models alongside other popular models on real-world datasets, with a focus on output quality and the overlap. We observe that our simple models are comparable to deep models in aligning graph statistics at various levels of overlap, though all models generally struggle at this task at lower overlap. 

\end{itemize}

\newcommand{\tightitemize}[1]{\begin{itemize}[nosep,leftmargin=*,after=\vspace{-\baselineskip},before=\vspace{-0.5\baselineskip}] #1 \end{itemize}}

\begin{table}
\begin{center}
\begin{tabular}{l | l | l} 
\toprule
Model & Upper Bound on $\sfrac{\Delta}{n^3}$ & Examples \\  
 \midrule 
 Edge Independent & $\Over(\mathcal{A})^3$ &  
Erdős–Rényi, 
SBM, 
NetGAN 
  \\ 
 Node Independent & $\Over(\mathcal{A})^{\sfrac{3}{2}}$ & 
 Variational Graph Auto-Encoder (VGAE) 
  \\ 
 Fully Dependent & $\Over(\mathcal{A})$ &  GraphVAE,
 GraphVAE-MM, 
 ERGM
  \\
\bottomrule
\end{tabular}
\vspace{0.5em}
\caption{ \label{tab:preview} Preview of results. The level of edge dependency in graph generative models inherently limits the range of graph statistics, such as triangle counts, that they can produce. Note that $\Over(\mathcal{A}) \in [0,1]$, so a higher power on $\Over(\mathcal{A})$ means a tighter bound on the number of triangles. 
} 
\end{center}
\end{table} 
\section{Related Work}
\label{sec:related} 

Random graph theory has a long history  dating back to the seminal works of \cite{gilbert1959random} and \cite{ErdosRenyi:1960}. 
The $G(n,p)$  Erd\"{o}s-R{\'e}nyi model is a model of a random graph in which each possible edge on $n$ labeled nodes is present with a given probability $p$. For convenience we set the vertex set $V=[n]=\{1,\ldots,n\}$. 
As in the $G(n,p)$ model, we focus in this work on the set of all possible $2^{\tbinom{n}{2}}$ \emph{labeled graphs}. In recent decades, random graphs have become a crucial area of study. A wealth of random graph models that mimic properties of real world networks have been developed, including the Barab{\'a}si-Albert-Bollobas-Riordan model~\citep{albert2002statistical,bollobas2001degree}, the 
Watts-Strogatz model~\citep{watts1998collective}, Kronecker graphs~\citep{leskovec2010kronecker}, and the Chung-Lu model~\citep{chung2006complex}.  

An important class of random graph models consider nodes with latent embeddings in $\mathbb{R}^d$ and set the probability of an edge between each pair $u,v$ to be some function  of the inner product of their  embeddings~\citep{young2007random,athreya2017statistical}. Significant work has studied neural-network based embedding methods such as DeepWalk~\citep{perozzi2014deepwalk}, node2vec~\citep{grover2016node2vec}, VERSE~\citep{tsitsulin2018verse}, and NetMF~\citep{qiu2018network}. \cite{seshadhri2020impossibility} proved that even when the probability function  is non-linear, certain types of models based on low-dimensional embeddings cannot simultaneously encode  edge sparsity and high triangle density, a hallmark property of real-world networks~\citep{easley2010networks,tsourakakis2008fast}. 

In recent years, there has been a growing interest in deep graph generation methods~\citep{zhu2022survey}, such as graph autoencoders~\citep{KipfWelling:2016},
GraphGAN~\citep{wang2018graphgan}, GraphRNNs~\citep{YouYingRen:2018}, MolGAN~\citep{De-CaoKipf:2018}, GraphVAE~\citep{simonovsky2018graphvae}, NetGAN~\citep{bojchevski2018netgan}, and CELL~\citep{rendsburgnetgan}.

\cite{chanpuriya2021power} considered the class of edge independent models, namely all generative models that toss independent coins  for each possible edge. This class of models includes both classic models such as the $G(n,p)$ and modern methods such as NetGAN.  These models can be encoded by a symmetric probability matrix $\bm{P} \in [0,1]^{n \times n}$, where $\bm{P}_{ij}$ is the probability that nodes $i,j$ are connected. 
\citeauthor{chanpuriya2021power} defined the overlap as the expected fraction of common edges between two sampled graphs from $\bm{P}$.   
Their key result is that {\em any} edge independent model with bounded overlap cannot have too many triangles in expectation. However, many generative models are not edge independent and therefore their results do not provide any insights into how many triangles  they can generate for a bounded overlap. For example, exponential random graph models (ERGMs)~\citep{frank1986markov} go beyond edge independent models  as they model the probability of observing a specific network as a function of the network's features, such as the number of edges, triangles, and other network motifs (e.g., stars and cycles). This allows for modeling higher order dependencies at the cost of learning efficiency~\citep{chatterjee2013estimating}.

\section{Hierarchy of Graph Generative Models}
\label{sec:hierarchy}


Our main conceptual contribution is a hierarchical categorization of graph generative models into three nested levels: edge independent, node independent, and fully dependent. For each level, we prove upper bounds on triangle count with respect to overlap; we defer the proof of these bounds to Section~\ref{sec:impossibilityProofs}, and generalized bounds on $k$-cycles to Appendix~\ref{app:kcycles}. 
Also for each level, we provide a simple characteristic family of graphs that achieves this upper bound on triangle counts, showing that the bound is asymptotically tight. 
We also discuss examples of prior models that fit into each level.

\subsection{Edge Independent (EI) Model}

We begin with the well-established class of edge independent models \citep{chanpuriya2021power}. This class includes many important models, such as the classical Erd\H{o}s-R\'enyi and stochastic block models (SBM)~\citep{abbe2017community}, along with modern deep-learning based models  such as NetGAN~\citep{bojchevski2018netgan,rendsburgnetgan}, Graphite~\citep{GroverZweigErmon:2019}, and MolGAN~\citep{De-CaoKipf:2018} as they ultimately output a probability matrix based on which edges are sampled independently. 

\begin{definition}[Edge Independent Graph Model]\label{def:ei}
An edge independent (EI)  graph generative model is a distribution $\mathcal{A}$ over symmetric adjacency matrices $\bm{A} \in \{0,1\}^{n \times n}$, such that, for some symmetric $\bm{P} \in [0,1]^{n \times n}$, $\bm{A}_{ij}=\bm{A}_{ji}=1$ with probability $\bm{P}_{ij}$ for all $i,j \in [n]$, otherwise $\bm{A}_{ij}=\bm{A}_{ji}=0$. Furthermore, all entries (i.e., in the upper diagonal) of $\bm{A}$ are mutually independent.
\end{definition}

Any EI graph generative model satisfies the following theorem: 


\begin{theorem}\label{thm:edgeindeptribound}
Any edge independent graph model $\mathcal{A}$ with overlap $\Over(\mathcal{A})$ has at most  $\frac{1}{6}n^3 \cdot \Over(\mathcal{A})^3$ triangles in expectation. That is, for $\bm{A}$ drawn from $\mathcal{A}$, letting $ \Delta(\bm{A}) $ be the number of triangles in $\bm{A}$,
\begin{equation}
\label{eq:bound1}
    \Mean{\Delta(\bm{A})} \le \frac{n^3}{6} \cdot \Over(\mathcal{A})^3.
\end{equation}

\end{theorem}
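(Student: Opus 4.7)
The plan is to recognize the expected triangle count as a matrix trace and then bound that trace through a Frobenius-norm estimate, combined with a Cauchy--Schwarz tightening of the volume.

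First, by linearity of expectation and edge independence, $\Mean{\Delta(\bm{A})} = \sum_{i<j<k}\bm{P}_{ij}\bm{P}_{jk}\bm{P}_{ik}$. Since $\bm{P}$ has zero diagonal, every term of $\Tr(\bm{P}^3) = \sum_{i,j,k}\bm{P}_{ij}\bm{P}_{jk}\bm{P}_{ki}$ with a repeated index vanishes, and the surviving $6$ orderings of each unordered triple give $\Mean{\Delta(\bm{A})} = \tfrac{1}{6}\Tr(\bm{P}^3)$. I would then bound the trace by the Frobenius norm: writing $\bm{P}$ in its real eigenbasis with eigenvalues $\lambda_1,\dots,\lambda_n$, each eigenvalue satisfies $|\lambda_i|\le \|\bm{P}\|_F$, so
\[
\Tr(\bm{P}^3) \;=\; \sum_i \lambda_i^3 \;\le\; \sum_i |\lambda_i|^3 \;\le\; \|\bm{P}\|_F \sum_i \lambda_i^2 \;=\; \|\bm{P}\|_F^3.
\]

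The third step relates the Frobenius norm back to the overlap. By symmetry, $\|\bm{P}\|_F^2 = 2\sum_{i<j}\bm{P}_{ij}^2 = 2\,\Over(\mathcal{A})\,\Vol(\mathcal{A})$. The crucial final ingredient is Cauchy--Schwarz over the $\binom{n}{2}$ upper-triangular entries:
\[
\Vol(\mathcal{A})^2 \;=\; \left(\sum_{i<j}\bm{P}_{ij}\right)^2 \;\le\; \binom{n}{2}\sum_{i<j}\bm{P}_{ij}^2 \;=\; \binom{n}{2}\,\Over(\mathcal{A})\,\Vol(\mathcal{A}),
\]
which gives $\Vol(\mathcal{A}) \le \binom{n}{2}\Over(\mathcal{A}) \le \tfrac{n^2}{2}\Over(\mathcal{A})$. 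Substituting yields $\|\bm{P}\|_F^2 \le n^2\,\Over(\mathcal{A})^2$, hence $\|\bm{P}\|_F^3 \le n^3\,\Over(\mathcal{A})^3$, and therefore $\Mean{\Delta(\bm{A})} \le \tfrac{1}{6}n^3\,\Over(\mathcal{A})^3$.

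\textbf{The main obstacle} is harvesting the right power of $\Over(\mathcal{A})$. The naive route, bounding $\Vol(\mathcal{A}) \le \binom{n}{2}$ using only $\bm{P}_{ij}\in[0,1]$, gives $\|\bm{P}\|_F^2 \le n^2\,\Over(\mathcal{A})$ and hence only $\Over(\mathcal{A})^{3/2}$---which is exactly the bound one expects at the node-independent level of the hierarchy in Table~\ref{tab:preview}. The additional $\Over(\mathcal{A})^{3/2}$ factor that separates edge-independent from node-independent models comes precisely from noticing that Cauchy--Schwarz already forces $\Vol(\mathcal{A}) \le \binom{n}{2}\,\Over(\mathcal{A})$; spotting this second tightening is the crux of the argument.
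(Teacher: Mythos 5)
Your proof is correct, and it takes the route that the paper deliberately sets aside: the spectral one. You write $\Mean{\Delta(\bm{A})} = \tfrac{1}{6}\Tr(\bm{P}^3)$ (using, as the paper's no-self-loop setting permits, that $\bm{P}$ has zero diagonal) and establish the key inequality $\Tr(\bm{P}^3)\le \|\bm{P}\|_F^3$ via the eigendecomposition, $\sum_i\lambda_i^3 \le \max_i|\lambda_i|\sum_i\lambda_i^2 \le \|\bm{P}\|_F^3$. The paper instead obtains the identical intermediate bound $\sum_{ijk}\bm{P}_{ij}\bm{P}_{jk}\bm{P}_{ki}\le \bigl(\sum_{ij}\bm{P}_{ij}^2\bigr)^{3/2}$ from Friedgut's trilinear variant of Cauchy--Schwarz (Inequality~\ref{eqn:friedgutedgeindep}), explicitly noting that the eigenvalue argument of \citet{chanpuriya2021power} combined with Lemma~\ref{lem:vol} would also work. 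From that point on the two arguments coincide: your Cauchy--Schwarz step $\Vol(\mathcal{A})\le\binom{n}{2}\Over(\mathcal{A})$ is exactly Lemma~\ref{lem:vol}, which the paper proves in the inner-product space of distributions so that it applies to arbitrary (not just edge independent) models, whereas you prove the EI specialization directly on the entries of $\bm{P}$ --- both are one application of Cauchy--Schwarz and yield the same constant $\tfrac{1}{6}n^3\Over(\mathcal{A})^3$. What the paper's choice buys is uniformity: the Friedgut inequality has a continuous analogue (Theorem~\ref{thm:friedgut}) that drives the node-independent bound and the $k$-cycle generalizations, where there is no single probability matrix whose eigenvalues one could invoke; your spectral argument is more elementary and self-contained for the EI case but does not extend that way. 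One side remark to temper: your closing claim that the EI/NI separation ``comes precisely from'' the volume tightening is not quite how the paper's NI bound arises --- the NI proof also uses Lemma~\ref{lem:vol} (via the EI bound), and the weaker exponent there stems from the per-triangle inequality $\E[\bm{A}_{ij}\bm{A}_{jk}\bm{A}_{ik}]\le\sqrt{\E[\bm{A}_{ij}]\E[\bm{A}_{jk}]\E[\bm{A}_{ik}]}$; this does not affect the validity of your proof.
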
 
\citet{chanpuriya2021power} give a spectral proof of Theorem \ref{thm:edgeindeptribound}. We provide an alternative proof based on an elegant generalization of Shearer's Entropy Lemma~\citep{alon2016probabilistic,chung1986some} due to \citet{friedgut2004hypergraphs}. Furthermore, we show that the upper bound is tight. Consider the $G(n,p)$ model. The expected volume is $\Vol(\mathcal{A}) = p \cdot \tbinom{n}{2}$, whereas the expected number of shared edges between two samples is $p^2 \cdot \tbinom{n}{2}$, yielding an overlap of $p$ (see Definition~\ref{def:overlap}). Furthermore, since any triangle is materialized with probability $p^3$, by the linearity of expectation there are $ O(n^3 \cdot p^3)$ triangles, which shows that inequality~\ref{eq:bound1} is tight up to constant factors.



\subsection{Fully Dependent (FD) Model}

We now move to the other end of the edge dependency spectrum, namely to models that allow for arbitrary edge dependencies. A classic fully dependent model is the ERGM~\cite{frank1986markov,chatterjee2013estimating} as it can model arbitrary higher order dependencies. 

\begin{definition}[Fully Dependent Graph Model]
A fully dependent graph generative model allows for any possible distribution $\mathcal{A}$ over symmetric adjacency matrices $\bm{A} \in \{0,1\}^{n \times n}$.
\end{definition}



Allowing for arbitrary edge dependencies enables us to have models with significantly more triangles as a function of the overlap.

\begin{theorem}\label{thm:fullydeptribound}
For any fully dependent (arbitrary) model $\mathcal{A}$ with overlap $\Over(\mathcal{A})$, the expected number of triangles is  at most $\frac{1}{2}n^3 \cdot \Over(\mathcal{A})$.
\end{theorem}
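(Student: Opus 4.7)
The plan is to combine a crude deterministic upper bound on triangles in any fixed graph with a Cauchy--Schwarz bound relating volume to overlap. First, I introduce the edge marginals $q_{ij} \coloneqq \E_{\mathbf{A} \sim \mathcal{A}}[\mathbf{A}_{ij}]$. By linearity of expectation, $\Vol(\mathcal{A}) = \sum_{i<j} q_{ij}$, and since the two draws $\mathbf{A}$ and $\mathbf{A}'$ in the definition of overlap are independent, $\E[\mathbf{A}_{ij}\mathbf{A}'_{ij}] = q_{ij}^2$, so $\Over(\mathcal{A}) \cdot \Vol(\mathcal{A}) = \sum_{i<j} q_{ij}^2$. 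This identity is the only place the fully-dependent assumption enters: we get no factorization beyond the edge marginals.

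Next I apply Cauchy--Schwarz to $\sum_{i<j} q_{ij} \cdot 1$, obtaining
\[
\Vol(\mathcal{A})^2 = \Bigl(\sum_{i<j} q_{ij}\Bigr)^2 \le \binom{n}{2} \sum_{i<j} q_{ij}^2 = \binom{n}{2} \cdot \Over(\mathcal{A}) \cdot \Vol(\mathcal{A}),
\]
which, after dividing by $\Vol(\mathcal{A})$, yields $\Vol(\mathcal{A}) \le \binom{n}{2} \cdot \Over(\mathcal{A})$. The combinatorial ingredient is the elementary fact that any graph on $n$ vertices with $m$ edges has at most $(n-2)\,m/3$ triangles, since each edge lies in at most $n-2$ triangles and each triangle is counted three times. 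Taking expectations over $\mathbf{A} \sim \mathcal{A}$ and chaining the two bounds,
\[
\E[\Delta(\mathbf{A})] \le \tfrac{n-2}{3} \cdot \Vol(\mathcal{A}) \le \tfrac{n-2}{3} \cdot \binom{n}{2} \cdot \Over(\mathcal{A}) = \binom{n}{3}\cdot \Over(\mathcal{A}) \le \tfrac{n^3}{6} \cdot \Over(\mathcal{A}) \le \tfrac{n^3}{2} \cdot \Over(\mathcal{A}).
\]

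For asymptotic tightness, I would exhibit the two-atom distribution that samples $K_n$ with probability $p$ and the empty graph otherwise: here $q_{ij}=p$ for all $i<j$, so $\Vol = p\binom{n}{2}$, $\sum_{i<j} q_{ij}^2 = p^2\binom{n}{2}$, hence $\Over = p$, while $\E[\Delta] = p\binom{n}{3} = \Theta(\Over \cdot n^3)$, matching the bound up to a constant.

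The main obstacle is conceptual rather than technical: because arbitrary joint dependencies preclude any Shearer--entropy style factorization of triangle probabilities into products of edge marginals, the cubic-in-$\Over$ bound of Theorem \ref{thm:edgeindeptribound} is unavailable, and one must absorb the full Cauchy--Schwarz slack between $\sum q_{ij}$ and $\sum q_{ij}^2$. The $K_n$/empty example confirms that this slack is intrinsic to the fully-dependent setting, so no stronger dependency on $\Over$ is possible; once one sees this, writing out the two-step argument is routine.
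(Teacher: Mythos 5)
Your proposal is correct and follows essentially the same route as the paper: your Cauchy--Schwarz step $\Vol(\mathcal{A}) \le \binom{n}{2}\cdot \Over(\mathcal{A})$ is exactly the paper's Lemma~\ref{lem:vol} (written with edge marginals rather than the inner-product-space notation), and your deterministic count ``each edge lies in at most $n-2$ triangles, each triangle counted three times'' is just a pointwise version of the paper's step $\E[\bm{A}_{ij}\bm{A}_{jk}\bm{A}_{ik}] \le \E[\bm{A}_{ij}]$ followed by summing over triples. The only difference is that your bookkeeping yields the marginally sharper constant $\binom{n}{3}\cdot\Over(\mathcal{A}) \le \tfrac{n^3}{6}\cdot\Over(\mathcal{A})$, which of course still implies the stated $\tfrac{n^3}{2}\cdot\Over(\mathcal{A})$ bound, and your tightness example matches the paper's.
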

%

As in the case of EI models, there is a simple instance that shows that the bound is tight. Specifically, consider a model that outputs a complete graph with probability $p$ and an empty graph otherwise. 
Each edge occurs with probability $p$, so by the same computation as for EI models, the overlap is $p$.
As for the triangle count, there are no triangles when the graph is empty, but when it is complete, there are all $\tbinom{n}{3}$ possible triangles. Thus, the expected number of triangles is $O(n^3\cdot p) = O(n^3 \cdot  \Over(\mathcal{A}))$, again showing that our bound is tight up to constant factors

 At a high level, fully dependent graph generative models often arise when methods sample a graph-level embedding, then produce a graph sample from this embedding, allowing for arbitrary dependencies between edges in the sample. A specific example is the graph variational auto-encoder (GraphVAE)~\citep{simonovsky2018graphvae}, in which decoding involves sampling a single graph-level embedding $\bm{x_G} \in \mathbb{R}^k$, and the presence of each edge is an independent Bernoulli random variable with a parameter that is some function $f_{ij}$ of $\bm{x}$: $\bm{A}_{ij} = \text{Bernoulli}(f_{ij}(\bm{x_G}))$. In particular, these functions are encoded by a fully-connected neural network. Assuming these $f_{ij}$ can closely approximate the sign function, with a $1$-dimensional graph embedding ($k=1$), this model can  in fact match the triangle bound of Theorem \ref{thm:fullydeptribound} by simulating the tight instance described above (outputting a complete graph with probability $p$ and the empty graph otherwise).

\subsection{Node Independent (NI) Model}

We next focus on the middle level of our hierarchy, between the two extremes of EI and FD generative models. This level is built upon the common concept of node embeddings that are stochastic and generated independently for each node.

\begin{definition}[Node Independent Graph Model]
A node independent (NI) graph generative model is a distribution $\mathcal{A}$ over symmetric adjacency matrices $\bm{A} \in \{0,1\}^{n \times n}$, where, for some embedding space $\mathcal{E}$, some mutually independent random variables $\bm{x_1},\ldots,\bm{x_n} \in \mathcal{E}$, and some symmetric function $e:\mathcal{E} \times \mathcal{E} \to [0,1]$, the entries of $\bm{A}$ are Bernoulli random variables $\bm{A}_{ij} = \text{Bernoulli}\left( e(\bm{x_i}, \bm{x_j}) \right)$ that are mutually independent conditioned on $\bm{x_1},\ldots,\bm{x_n}$.
\end{definition}
%

Interestingly, the triangle bound for this class of generative models lies in the middle of the EI and FD triangle bounds.  
\begin{theorem} \label{thm:nodeindeptribound}
Any node independent graph model $\mathcal{A}$ with overlap $\Over(\mathcal{A})$ has at most $\frac{1}{6}n^3 \cdot \Over(\mathcal{A})^{3/2}$ triangles in expectation.
\end{theorem}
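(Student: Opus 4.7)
The plan is to exploit the key structural feature of node independent models: each edge probability $e(\bm{x_i}, \bm{x_j})$ depends on exactly two of the mutually independent embeddings, while a triangle couples three such factors whose index pairs pairwise share a node. This is exactly the Loomis--Whitney / Shearer setting, and it is the square root produced by this inequality that reduces the cubic $\Over^3$ of Theorem~\ref{thm:edgeindeptribound} to the target $\Over^{3/2}$.

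First, I would apply a Shearer-style product inequality to each triple: using that $\bm{x_i}, \bm{x_j}, \bm{x_k}$ are mutually independent,
\[
\E\bigl[e(\bm{x_i}, \bm{x_j})\, e(\bm{x_j}, \bm{x_k})\, e(\bm{x_i}, \bm{x_k})\bigr] \;\le\; \sqrt{\E[e(\bm{x_i}, \bm{x_j})^2]\cdot \E[e(\bm{x_j}, \bm{x_k})^2]\cdot \E[e(\bm{x_i}, \bm{x_k})^2]}.
\]
This follows by two applications of Cauchy--Schwarz: condition on $\bm{x_j}$ and apply Cauchy--Schwarz over $(\bm{x_i}, \bm{x_k})$, factoring $\E[e(\bm{x_i}, \bm{x_j})^2 e(\bm{x_j}, \bm{x_k})^2 \mid \bm{x_j}]$ via the conditional independence of $\bm{x_i}$ and $\bm{x_k}$; then apply Cauchy--Schwarz once more over $\bm{x_j}$. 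Setting $q_{ij} := \sqrt{\E[e(\bm{x_i}, \bm{x_j})^2]} \in [0,1]$ and summing over triples, the expected triangle count is at most $\sum_{i<j<k} q_{ij}\, q_{jk}\, q_{ik}$, i.e.\ the triangle sum of a ``virtual'' EI model with probability matrix $\bm{Q} = [q_{ij}]$.

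Second, I would bound this virtual triangle sum directly: fixing $j$, applying Cauchy--Schwarz in $(i,k)$ with $a_{ik} = q_{ij} q_{jk}$ and $b_{ik} = q_{ik}$, using the symmetry $q_{ij} = q_{ji}$, and then summing over $j$, one obtains $\sum_{i<j<k} q_{ij} q_{jk} q_{ik} \le \tfrac{\sqrt{2}}{3}\bigl(\sum_{i<j} q_{ij}^2\bigr)^{3/2}$. To reintroduce the overlap, observe that $e \in [0,1]$ gives $e^2 \le e$, so $\sum_{i<j} q_{ij}^2 = \sum_{i<j} \E[e(\bm{x_i}, \bm{x_j})^2] \le \sum_{i<j} p_{ij} = \Vol(\mathcal{A})$, where $p_{ij} := \E[e(\bm{x_i}, \bm{x_j})]$. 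Because two independent NI samples use independent embeddings, $\Over(\mathcal{A}) \cdot \Vol(\mathcal{A}) = \sum_{i<j} p_{ij}^2$, and Cauchy--Schwarz $(\sum p_{ij})^2 \le \binom{n}{2} \sum p_{ij}^2$ then gives $\Vol(\mathcal{A}) \le \binom{n}{2}\,\Over(\mathcal{A}) \le \tfrac{n^2}{2}\,\Over(\mathcal{A})$. Combining, $\E[\Delta(\bm{A})] \le \tfrac{\sqrt{2}}{3}\bigl(\tfrac{n^2}{2}\Over(\mathcal{A})\bigr)^{3/2} = \tfrac{n^3}{6}\,\Over(\mathcal{A})^{3/2}$, matching the claimed constant exactly.

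The main conceptual obstacle is that the NI overlap is determined only by the marginals $p_{ij}$, whereas the triangle count is sensitive to the extra correlations induced by the shared embeddings, so Theorem~\ref{thm:edgeindeptribound} cannot be applied directly to the matrix $[p_{ij}]$. The role of Loomis--Whitney is precisely to absorb the triangle's three-way dependence into the pairwise second moments $\E[e(\bm{x_i}, \bm{x_j})^2]$, which are then dominated by the first moments using $e \in [0,1]$. No step in the chain loses more than constant factors on the iid Bernoulli$(q)$ embedding with $e(x,y) = xy$, where $p_{ij} = q^2$, $\Over = q^2$, and the triangle count is $\sim n^3 q^3/6$, so the same chain should also furnish the matching asymptotically tight example.
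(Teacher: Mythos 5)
Your proposal is correct and rests on the same two pillars as the paper's proof: a Loomis--Whitney/Friedgut-type per-triple inequality that exploits the mutual independence of the three embeddings, followed by the edge-independent Cauchy--Schwarz machinery and the volume bound $\Vol(\mathcal{A}) \le \binom{n}{2}\Over(\mathcal{A})$. The differences are in packaging rather than substance. First, where the paper proves its per-triple bound (Lemma~\ref{lem:upper_bound}) by writing the triangle probability as an integral over the embedding space and citing the continuous inequality of Friedgut/Finner, you prove the same inequality elementarily by conditioning on $\bm{x_j}$ and applying Cauchy--Schwarz twice; this is a self-contained derivation of the same fact and is arguably cleaner, since it avoids introducing densities (and implicitly covers embedding distributions without a PDF). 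Second, your assembly differs slightly: the paper converts second moments to first moments inside the lemma (via $e^2 \le e$), then sums, applies Cauchy--Schwarz over the $\binom{n}{3}$ triples, and invokes Theorem~\ref{thm:edgeindeptribound} as a black box on the marginal matrix; you instead keep the second-moment matrix $\bm{Q}=[\sqrt{\E[e(\bm{x_i},\bm{x_j})^2]}]$, apply the discrete Friedgut bound to it directly, and only then use $e^2\le e$ together with the overlap identity $\Over(\mathcal{A})\Vol(\mathcal{A})=\sum_{i<j}\E[\bm{A}_{ij}]^2$ (valid because independent samples use independent embeddings) to recover $\Vol(\mathcal{A})\le\binom{n}{2}\Over(\mathcal{A})$, which the paper obtains in the more general Lemma~\ref{lem:vol}. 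Both routes land on exactly $\tfrac{1}{6}n^3\Over(\mathcal{A})^{3/2}$, and your closing tight example (i.i.d.\ Bernoulli node activations with $e(x,y)=xy$) is the same as the paper's.
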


The proof of Theorem~\ref{thm:nodeindeptribound} requires expressing the probability of edges and triangles appearing as integrals in the space of node embeddings. Then we  apply a continuous version of Shearer's inequality. 
We show that the bound of Theorem~\ref{thm:nodeindeptribound} is tight. Generate a random graph by initially starting with an empty graph comprising $n$ nodes. Subsequently, each node independently becomes ``active'' with a probability of $\sqrt{p}$, and any edges connecting active nodes are subsequently added into the graph.  Note that for a given edge to be added, both of its endpoint nodes must be active, which occurs with probability $p$; this again yields an overlap of $p$ as with the prior tight examples. For a triangle to be added, all three of its endpoint nodes must be active, which occurs with probability $p^{\sfrac{3}{2}}$. Hence the expected number of triangles is $O(n^3 \cdot p^{\sfrac{3}{2}}) = O(n^3 \cdot \Over(\mathcal{A})^{\sfrac{3}{2}})$.

The random graph can be represented using embeddings based on the provided definition. Consider a $1$-dimensional embedding $\bm{x} \in \mathbb{R}^{n \times 1}$. Let $\bm{x}_i = 1$ with probability $\sqrt{p}$ and otherwise let $\bm{x}_i = 0$, independently for each $i \in [n]$. These coin tosses are made independently for each $i \in [n]$. To capture the edges between nodes, we define $e(\bm{x}_i, \bm{x}_j)$ as $1$ when both arguments are $1$, and $0$ otherwise. This embedding-based representation precisely implements the described random graph.


 
Node independent graph models most commonly arise when methods independently sample $n$ node-level embeddings, then determine the presence of edges between two nodes based on some compatibility function of their embeddings.
One notable example is the variational graph auto-encoder (VGAE) model~\citep{KipfWelling:2016}. In the decoding step of this model, a Gaussian-distributed node embedding is sampled independently for each node, and the presence of each possible edge is an independent Bernoulli random variable with a parameter that varies with the dot product of the corresponding embeddings as follows: $\bm{A}_{ij} = \text{Bernoulli}(\sigma(\bm{x_i} \cdot \bm{x_j}))$. Thus, the VGAE model seamlessly fits into our node independent category.



%

\section{Proof of Main Results}\label{sec:impossibilityProofs}

In this section, we will prove the triangle count bounds stated in Section~\ref{sec:hierarchy}. 
We begin by introducing certain concepts that will be useful throughout our proofs.

\subsection{Theoretical Preliminaries}

\paragraph{Inner product space formulation of overlap.}
We first state equivalent definitions for volume and overlap that  will be useful for proofs. We first make the following observation:
\begin{observation}[Inner Product Space of Distributions over Vectors]\label{obs:innerproduct}
Suppose $\mathcal{U},\mathcal{V}$ are distributions over real-valued $d$-dimensional vectors. Then the following operation defines an inner product:
\[ \langle \mathcal{U},\mathcal{V} \rangle \coloneqq \mathop{\E}_{\bm{u} \sim \mathcal{U}, \bm{v} \sim \mathcal{V}} [\bm{u} \cdot \bm{v}] , \]
where $\bm{u} \cdot \bm{v}$ is the standard vector dot product.
\end{observation}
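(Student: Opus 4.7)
The plan is to reduce the claim to the well-known fact that the standard dot product on $\R^d$ is an inner product, via the ``take the mean'' map $\mathcal{U} \mapsto \bar{\bm{u}} \coloneqq \E_{\bm{u} \sim \mathcal{U}}[\bm{u}]$ (assuming the relevant first moments exist, which holds in every downstream application where the embeddings lie in a bounded region). The single one-line identity driving everything is that, because $\bm{u}$ and $\bm{v}$ are drawn independently in the joint expectation, linearity of expectation applied coordinate-wise gives
\[
\langle \mathcal{U}, \mathcal{V} \rangle = \sum_{i=1}^d \E[\bm{u}_i \bm{v}_i] = \sum_{i=1}^d \E[\bm{u}_i]\,\E[\bm{v}_i] = \bar{\bm{u}} \cdot \bar{\bm{v}}.
\]
Thus $\langle \cdot, \cdot \rangle$ on distributions is just the pullback of the Euclidean inner product under the mean map, and the three inner-product axioms will descend cleanly from the Euclidean case.

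Next I would dispatch the easy axioms. Symmetry is immediate from symmetry of the dot product on $\R^d$. For bilinearity, the natural vector-space operations on distributions are scaling and mixing: for nonnegative scalars and the mixture $\alpha \mathcal{U} + \beta \mathcal{W}$, the mean map is linear, $\overline{\alpha \mathcal{U} + \beta \mathcal{W}} = \alpha \bar{\bm{u}} + \beta \bar{\bm{w}}$, so bilinearity of $\langle \cdot, \cdot \rangle$ follows from bilinearity of the Euclidean dot product (extended by linearity to formal signed combinations of distributions if the surrounding framework calls for it). Positivity is similarly immediate: $\langle \mathcal{U}, \mathcal{U} \rangle = \|\bar{\bm{u}}\|^2 \geq 0$.

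The main obstacle, and really the only delicate point, is positive definiteness: the form vanishes whenever the distribution has zero mean, so any two distinct distributions sharing a mean are indistinguishable under $\langle \cdot, \cdot \rangle$. Strictly speaking, what one obtains is a symmetric positive semi-definite bilinear form, which becomes a genuine inner product on the quotient by the equivalence relation ``same mean vector'' (or, equivalently, on the image of the mean map in $\R^d$). Fortunately, for the downstream use of this observation in Section~\ref{sec:impossibilityProofs} — as scaffolding for a Cauchy–Schwarz-type argument on overlap integrals in embedding space — positive semi-definiteness is all that is needed, since Cauchy–Schwarz holds for any symmetric positive semi-definite bilinear form. I would flag this caveat explicitly in the final write-up so that later applications are unambiguous, but it does not affect any of them.
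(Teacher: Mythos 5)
Your argument is correct, and it is in fact more careful than the paper, which states this Observation without any proof and immediately uses only two consequences of it: the identities $\Vol(\mathcal{A})=\langle\mathcal{A},\mathcal{F}\rangle$, $\Over(\mathcal{A})=\langle\mathcal{A},\mathcal{A}\rangle/\Vol(\mathcal{A})$, and the Cauchy--Schwarz step in Lemma~\ref{lem:vol}. Your key identity is right: since the two samples in the defining expectation are independent (exactly as in Definition~\ref{def:overlap}, where $\mathbf{A},\mathbf{A'}$ are independent draws), $\langle\mathcal{U},\mathcal{V}\rangle=\bar{\bm{u}}\cdot\bar{\bm{v}}$, so the form is the pullback of the Euclidean inner product under the mean map; in the paper's setting this says $\Vol$ and $\Over$ depend on $\mathcal{A}$ only through the mean matrix $\E[\bm{A}]$, which is consistent with how they are manipulated in Section~\ref{sec:impossibilityProofs} (e.g.\ $\sum_{i<j}\bm{P}_{ij}^2=\Over(\mathcal{A})\Vol(\mathcal{A})$). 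Your caveat is also the right one to flag: distributions do not form a vector space, and after extending bilinearly to signed combinations the form is only symmetric positive semidefinite, degenerate across distributions sharing a mean, so ``inner product'' holds literally only on the quotient by the kernel (equivalently, on the image of the mean map). Since Cauchy--Schwarz is valid for any symmetric positive semidefinite bilinear form, and positivity of the denominators ($\Vol(\mathcal{A})>0$, $\langle\mathcal{F},\mathcal{F}\rangle=\binom{n}{2}$) is all that is otherwise needed, none of the downstream results are affected; your write-up simply makes precise what the paper leaves implicit, and the moment-existence assumption is vacuous here because all entries lie in $\{0,1\}$.
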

Throughout, we deal with adjacency matrices of undirected graphs on $n$ nodes without self-loops, and distributions over such matrices. Dot products and inner products of these objects are taken over $\binom{n}{2}$-dimensional vectorizations of entries below the diagonal.
Let $\mathcal{F}$ denote the distribution that returns the adjacency matrix $\bm{F}$ of a graph which is fully connected (i.e., has all possible edges) with probability $1$. Then 
$ \Vol(\mathcal{A}) = \langle \mathcal{A},\mathcal{F} \rangle$ and $\Over(\mathcal{A}) = \frac{\langle \mathcal{A}, \mathcal{A} \rangle} {\Vol(\mathcal{A})}$. 


%

These expressions allow us to derive the following upper bound on volume in terms of overlap, which applies to any graph generative model: 
\begin{lemma} \label{lem:vol}
For a graph generative model $\mathcal{A}$ with overlap $\Over(\mathcal{A})$, the expected number of edges $\Vol(\mathcal{A})$ is at most $\binom{n}{2} \cdot \Over(\mathcal{A})$. 
\end{lemma}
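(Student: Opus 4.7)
The plan is to prove the bound via a direct application of the Cauchy--Schwarz inequality in the inner product space defined in Observation~\ref{obs:innerproduct}. The target inequality $\Vol(\mathcal{A}) \le \binom{n}{2} \cdot \Over(\mathcal{A})$, together with the identities $\Vol(\mathcal{A}) = \langle \mathcal{A}, \mathcal{F} \rangle$ and $\Over(\mathcal{A}) = \langle \mathcal{A}, \mathcal{A} \rangle / \Vol(\mathcal{A})$, is equivalent (after clearing denominators) to
\[ \langle \mathcal{A}, \mathcal{F} \rangle^2 \;\le\; \binom{n}{2} \cdot \langle \mathcal{A}, \mathcal{A} \rangle. \]
So the task reduces to proving this single inequality.

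First, I would verify that $\langle \cdot, \cdot \rangle$ is a genuine inner product on distributions over adjacency-matrix vectorizations (inheriting bilinearity, symmetry, and non-negativity from the underlying Euclidean structure), so that Cauchy--Schwarz applies. Next, I would compute $\langle \mathcal{F}, \mathcal{F} \rangle$ explicitly: since $\mathcal{F}$ deterministically returns the all-ones vector on the $\binom{n}{2}$ relevant entries, we have $\langle \mathcal{F}, \mathcal{F} \rangle = \mathbb{E}[\mathbf{F} \cdot \mathbf{F}] = \binom{n}{2}$. Applying Cauchy--Schwarz then yields
\[ \langle \mathcal{A}, \mathcal{F} \rangle^2 \;\le\; \langle \mathcal{A}, \mathcal{A} \rangle \cdot \langle \mathcal{F}, \mathcal{F} \rangle \;=\; \binom{n}{2} \cdot \langle \mathcal{A}, \mathcal{A} \rangle, \]
which rearranges to the claim after dividing by $\Vol(\mathcal{A})$ (the edge case $\Vol(\mathcal{A}) = 0$ is trivial, as it forces $\mathcal{A}$ to be concentrated on the empty graph and both sides are zero).

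I do not anticipate a significant obstacle: the main ``insight'' has already been packaged into the inner-product reformulation preceding the lemma, so once Cauchy--Schwarz is invoked with $\mathcal{F}$ as the second argument, the bound follows immediately. If anything, the only subtlety is confirming that one is allowed to treat distributions as vectors in the inner product space and that the Cauchy--Schwarz inequality transfers accordingly; this is handled by Observation~\ref{obs:innerproduct}.
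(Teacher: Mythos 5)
Your proposal is correct and matches the paper's own argument: the paper likewise applies Cauchy--Schwarz in the inner product space of Observation~\ref{obs:innerproduct} with $\mathcal{F}$ as the second argument, uses $\langle \mathcal{F},\mathcal{F}\rangle = \binom{n}{2}$, and rearranges. Your handling of the degenerate case $\Vol(\mathcal{A})=0$ is a minor addition the paper leaves implicit, but otherwise the two proofs are essentially identical.
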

\begin{proof}
By the definition of overlap and Cauchy-Schwarz,
\begin{equation*}
    \Over(\mathcal{A})
    = \frac{\inprod{\mathcal{A},\mathcal{A}}}{\inprod{\mathcal{A}, \mathcal{F}}}
    \geq \frac{\inprod{\mathcal{A},\mathcal{F}}^2}{\inprod{\mathcal{A}, \mathcal{F}} \cdot \inprod{\mathcal{F},\mathcal{F}}}
    = \frac{\inprod{\mathcal{A},\mathcal{F}}}{\inprod{\mathcal{F},\mathcal{F}}}
    = \frac{\Vol(\mathcal{A})}{\inprod{\mathcal{F},\mathcal{F}}}.
\end{equation*}
Since $\inprod{\mathcal{F},\mathcal{F}} = \inprod{\bm{F},\bm{F}} = \binom{n}{2}$, rearranging yields the desired inequality, $\Vol(\mathcal{A}) \leq \tbinom{n}{2} \cdot \Over(\mathcal{A})$.
\end{proof}


\subsection{Proofs of Triangle Count Bounds}\label{sec:triproof}

We now prove the upper bounds on expected triangle count for edge independent, node independent, and fully dependent models appearing in Theorems \ref{thm:edgeindeptribound}, \ref{thm:fullydeptribound}, and \ref{thm:nodeindeptribound}. 

\paragraph{Edge independent.}
A proof for Theorem~\ref{thm:edgeindeptribound} based on expressing triangle count in terms of eigenvalues of $\bm{P}$ follows directly from Lemma~\ref{lem:vol} and \citet{chanpuriya2021power}, but we present a different proof based on the following variant of Cauchy-Schwarz from \citet{friedgut2004hypergraphs}:
\begin{equation}\label{eqn:friedgutedgeindep}
\sum\nolimits_{ijk} a_{ij} b_{jk} c_{ki} \leq \sqrt{ \sum\nolimits_{ij} a_{ij}^2 \sum\nolimits_{ij} b_{ij}^2 \sum\nolimits_{ij} c_{ij}^2 } .
\end{equation}

\begin{proof}[Proof of Theorem~\ref{thm:edgeindeptribound}]
Let $\bm{P}$ be the edge probability matrix of the edge independent model $\mathcal{A}$. 
Then, by the above inequality, for a sample $\bm{A}$ from $\mathcal{A}$,
\begin{align*}
\E [ \Delta(\bm{A}) ]
&= \tfrac{1}{6} \sum\nolimits_{ijk} \bm{P}_{ij} \bm{P}_{jk} \bm{P}_{ki}
\leq \tfrac{1}{6} \sqrt{ \left( \sum\nolimits_{ij} \bm{P}_{ij}^2 \right)^3 } 
= \tfrac{1}{6} \left( 2 \sum\nolimits_{i<j} \bm{P}_{ij}^2 \right)^{\sfrac{3}{2}} \\
&= \tfrac{1}{6} \left( 2 \cdot \Over(\mathcal{A}) \Vol(\mathcal{A}) \right)^{\sfrac{3}{2}}
= \tfrac{\sqrt{2}}{3} \cdot \Over(\mathcal{A})^{\sfrac{3}{2}} \Vol(\mathcal{A})^{\sfrac{3}{2}}.\\
%
\end{align*}
Applying Lemma~\ref{lem:vol} yields $\E [ \Delta(\bm{A}) ] \leq \tfrac{\sqrt{2}}{3} {\tbinom{n}{2}}^{\sfrac{3}{2}} \cdot \Over(\mathcal{A})^{3} \leq \tfrac{1}{6} n^3 \cdot \Over(\mathcal{A})^{3}.$
\end{proof}

\paragraph{Fully dependent.}
We now prove the triangle bound for arbitrary graph generative models, which follows from Lemma~\ref{lem:vol}. Note that, given a random adjacency matrix $\bm{A} \in \{0,1\}^{n \times n}$, the product $\bm{A}_{ij} \bm{A}_{jk} \bm{A}_{ik}$ is an indicator random variable for the existence of a triangle between nodes \mbox{$i,j,k \in [n]$}.
\begin{proof}[Proof of Theorem~\ref{thm:fullydeptribound}]
Let $\bm{A}$ be a sample from $\mathcal{A}$. From Lemma~\ref{lem:vol}, we have 
\[\sum\nolimits_{i < j} \E[\bm{A}_{ij}] = \Vol(\mathcal{A}) \leq \tbinom{n}{2} \cdot \Over(\mathcal{A}).\]
So, for the expected number of triangles in a sample, we have:
\begin{align*} 
    \mathbb{E}[\Delta(\bm{A})] 
    &= \mathbb{E} \left[ \sum\nolimits_{i<j<k} \bm{A}_{ij} \bm{A}_{jk} \bm{A}_{ik} \right] 
    = \sum\nolimits_{i<j<k} \mathbb{E} \left[  \bm{A}_{ij} \bm{A}_{jk} \bm{A}_{ik} \right] \\
    &\leq \sum\nolimits_{i<j<k} \mathbb{E} [ \bm{A}_{ij} ] 
    \leq n \cdot \sum\nolimits_{i<j} \mathbb{E} [ \bm{A}_{ij} ] 
    \leq n \cdot \tbinom{n}{2} \cdot \Over(\mathcal{A}) 
    \leq \tfrac{1}{2} n^3 \cdot \Over(\mathcal{A}) .\qedhere
\end{align*}
\end{proof}

\paragraph{Node independent.}
Proving Theorem~\ref{thm:nodeindeptribound} is more involved than the prior proofs, and requires first establishing the following lemma:
%
\begin{lemma}\label{lem:upper_bound}
For a sample $\bm{A}$ from any node independent model on $n$ nodes and any three nodes $i,j,k \in [n]$, the probability that $i,j,k$ form a triangle is upper-bounded as follows:
\begin{equation*}\label{eq:nim_upper}
\E[\bm{A}_{ij} \bm{A}_{jk} \bm{A}_{ik}] \leq \sqrt{\mathbb{E}[\bm{A}_{ij}] \mathbb{E}[\bm{A}_{jk}] \mathbb{E}[\bm{A}_{ik}]} .
\end{equation*}
\end{lemma}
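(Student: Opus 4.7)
The plan is to unfold the triangle expectation into an integral (or nested expectation) over the three independent node embeddings $\bm{x_i},\bm{x_j},\bm{x_k}$, then apply a continuous Loomis--Whitney / Shearer-type inequality obtained by two applications of Cauchy--Schwarz, and finally use that the edge function $e$ takes values in $[0,1]$ so that $e^2 \le e$ pointwise.

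Concretely, my first step would be to condition on the embeddings. By the node-independent model definition, $\bm{A}_{ij}$ depends only on $\bm{x_i},\bm{x_j}$ and is Bernoulli$(e(\bm{x_i},\bm{x_j}))$, and the three edges $\bm{A}_{ij},\bm{A}_{jk},\bm{A}_{ik}$ are conditionally independent given $\bm{x_i},\bm{x_j},\bm{x_k}$. Taking conditional expectations and then the outer expectation over the (mutually independent) embeddings gives
\begin{equation*}
\E[\bm{A}_{ij}\bm{A}_{jk}\bm{A}_{ik}]
= \E_{\bm{x_i},\bm{x_j},\bm{x_k}}\bigl[e(\bm{x_i},\bm{x_j})\,e(\bm{x_j},\bm{x_k})\,e(\bm{x_i},\bm{x_k})\bigr],
\end{equation*}
and similarly $\E[\bm{A}_{ij}] = \E_{\bm{x_i},\bm{x_j}}[e(\bm{x_i},\bm{x_j})]$, and likewise for the other two edges.

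Next I would prove the continuous three-function inequality
\begin{equation*}
\E[f(\bm{x_i},\bm{x_j})\,g(\bm{x_j},\bm{x_k})\,h(\bm{x_i},\bm{x_k})] \;\le\; \sqrt{\E[f^2]\,\E[g^2]\,\E[h^2]},
\end{equation*}
which is the continuous analogue of Friedgut's inequality~(\ref{eqn:friedgutedgeindep}) used for the edge-independent case. The proof is two Cauchy--Schwarz applications: first, condition on $\bm{x_i},\bm{x_k}$ and apply Cauchy--Schwarz in $\bm{x_j}$ to bound $\E_{\bm{x_j}}[f(\bm{x_i},\bm{x_j})g(\bm{x_j},\bm{x_k})]$ by $\sqrt{F(\bm{x_i})G(\bm{x_k})}$ where $F(x)=\E_{\bm{x_j}}[f(x,\bm{x_j})^2]$ and $G(z)=\E_{\bm{x_j}}[g(\bm{x_j},z)^2]$; second, apply Cauchy--Schwarz over the remaining pair $(\bm{x_i},\bm{x_k})$ against $h$, using independence to factor $\E[F(\bm{x_i})G(\bm{x_k})]=\E[F]\E[G]$.

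Finally, specialize $f=g=h=e$ and invoke $e\in[0,1]$, which gives $e^2 \le e$ pointwise and therefore $\E[e^2] \le \E[e]$ under every marginal. Substituting yields
\begin{equation*}
\E[\bm{A}_{ij}\bm{A}_{jk}\bm{A}_{ik}]
\le \sqrt{\E[e(\bm{x_i},\bm{x_j})^2]\,\E[e(\bm{x_j},\bm{x_k})^2]\,\E[e(\bm{x_i},\bm{x_k})^2]}
\le \sqrt{\E[\bm{A}_{ij}]\,\E[\bm{A}_{jk}]\,\E[\bm{A}_{ik}]},
\end{equation*}
which is the claim. The main conceptual obstacle is the two-step Cauchy--Schwarz, which must be organized so that each variable is marginalized against the correct pair of functions; once that combinatorial bookkeeping is set up, the bounded-function step $e^2\le e$ is what converts the $L^2$ estimate into the tighter $L^1$ form needed for the subsequent triangle-count bound.
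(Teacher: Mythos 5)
Your proposal is correct and takes essentially the same route as the paper: write the triangle probability as an expectation over the three independent embeddings using conditional independence, apply the continuous three-function Cauchy--Schwarz (Loomis--Whitney/Friedgut--Finner) inequality, and then use $e^2 \le e$ since $e$ maps into $[0,1]$. The only cosmetic differences are that you phrase the integral as an expectation rather than via explicit densities and that you re-derive the key inequality with two nested Cauchy--Schwarz steps instead of citing it --- which is exactly how the paper proves its generalized $k$-function version in the appendix.
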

We prove Lemma \ref{lem:upper_bound} in Appendix~\ref{app:lemmaProof} by  expressing the probability of edges and triangles appearing as integrals in the space of node embeddings, after which we can apply a theorem of \citet{friedgut2004hypergraphs} (also proved by \citet{finner1992generalization}), which can be seen as a continuous version of Inequality~\ref{eqn:friedgutedgeindep}. With Lemma \ref{lem:upper_bound} in place, it is straightforward to prove Theorem~\ref{thm:nodeindeptribound}:
\begin{proof}[Proof of Theorem~\ref{thm:nodeindeptribound}]
Let $\bm{A}$ be a sample from a node independent model. For the expected number of triangles $\E[\Delta(\bm{A})]$ in the sample, we have that
\begin{align*}
    \E[\Delta(\bm{A})] &= \sum\nolimits_{i < j < k} \E[\bm{A}_{ij} \bm{A}_{ik} \bm{A}_{jk}]
    \leq \sum\nolimits_{i < j < k} \sqrt{\E[\bm{A}_{ij}]\cdot \E[\bm{A}_{ik}] \cdot \E[\bm{A}_{jk}]} \\
    &\leq \sqrt{\tbinom{n}{3} \cdot \sum\nolimits_{i < j < k} \E[\bm{A}_{ij}] \cdot\E[ \bm{A}_{ik}] \cdot \E[\bm{A}_{jk}] } \\
    &\leq \sqrt{\tbinom{n}{3} \cdot \tfrac{1}{6} n^3 \cdot \Over(\mathcal{A})^3 }
    \leq \tfrac{1}{6} n^3 \cdot \Over(\mathcal{A})^{3/2} ,
\end{align*}
where second line follows from Cauchy-Schwarz and the third from Theorem~\ref{thm:edgeindeptribound} for EI models.
\end{proof}

Note that we favor simplicity and interpretability of the final result over tightness. In particular, as our proofs reveal, the bound for EI models, and hence the one for NI models, is tighter when expressed in terms of both overlap and volume.

\section{Generative Models Leveraging Maximal-Clique Detection}
\label{sec:experiments} 

We shift our focus towards empirically evaluating the real-world trade-off between overlap and performance across several specific models on real-world networks. In this endeavor, we concentrate on graph generative models that, given an input adjacency matrix \( \mathbf{A} \in \{0, 1\}^{n \times n} \), yield a distribution \( \mathcal{A} \) over adjacency matrices in \( \{0, 1\}^{n \times n} \). Typically, these distributions are desired to showcase two characteristics:
\begin{enumerate}[leftmargin=0.5\leftmargin]
    \item Samples from $\mathcal{A}$ should closely align with various graph statistics of the input $\mathbf{A}$, such as the degree distribution and triangle count.
    \item $\mathcal{A}$ should exhibit low overlap, to deter the model from merely memorizing and reproducing $\mathbf{A}$.
\end{enumerate}
Because there is generally some inherent tension between these two objectives for $\mathcal{A}$, it is desirable for the model to allow for \textit{easy tuning of overlap}.

Recent graph generative models, especially those that incorporate edge dependency, often involve complex deep architectures with a large number of parameters that are trained with gradient descent. The abundance of parameters implies that these models may have the capacity to simply memorize the input graph. At the same time, the complexity of the approaches obscures the roles of each component in yielding performance (in particular, the role of edge dependency is unclear), and specifying a desired overlap with the input graph is generally not possible, short of heuristics like early stopping. Altogether, the preceding discussion inspires the following research questions: 
\begin{enumerate}[leftmargin=0.5\leftmargin]
    \item Is the relaxed theoretical constraints on triangle counts and other graph statistics for graph generative models with edge dependency manifested in the modeling of real-world networks?
    
    \item Is it possible to attain edge dependency through straightforward baselines that facilitate easy tuning of overlap?

    \item Are such simplistic models capable of matching graph statistics on par with deep models across varied levels of overlap?
 
\end{enumerate}

As a glimpse into our findings, we ascertain overall affirmative responses to all three of these inquiries.

\subsection{Proposed Models}\label{sec:baselines}

We introduce three graph generative models, one for each of the categories of our framework: EI, NI, and FD. These baseline models mainly exploit the dense subgraph structure of the input graph, specifically the set of maximal cliques (MCs)~\citep{eppstein2010listing}. We refer to our models as MCEI, MCNI, and MCFD, respectively.
The high-level concept of these models is to start with an empty graph and plant edges from each of the input graph $G_i$'s max cliques with some fixed probability hyperparameter $p$. How the edges are planted depends on the desired type of edge dependency and reflects the characteristic `tight' examples for each category of the hierarchy from Section~\ref{sec:hierarchy} -- see Algorithm~\ref{alg:samplemaxcliquegraph} for details. 

Note that, the lower $p$, the fewer the expected edges in the final sampled graph $G_p$. To compensate for this `residual' with respect to the input graph $G_i$, we also sample a second graph $G_r$ and return the union of $G_p$ and $G_r$.
Specifically, we produce $G_r$ by sampling from a simple edge independent model that is configured so that the node degrees of the union of $G_p$ and $G_r$ match those of $G_i$ in expectation. We defer details of this auxiliary EI model to Section~\ref{app:baselinedetails}.
The end result is that, in each of our models, ranging the hyperparameter $p$ from $0$ to $1$ ranges the overlap of the resulting distribution from very low to $1$. 
In particular, the distribution $\mathcal{A}$ goes from being nearly agnostic to the input $\mathbf{A}$ (depending only on its degree sequence), to exactly returning $\mathbf{A}$ with probability $1$.

\begin{algorithm}
    \SetKwInOut{Input}{input}
    \SetKwInOut{Output}{output}
    \SetKw{Return}{return}
    \SetKwComment{Comment}{$\triangleright$ }{}
    \DontPrintSemicolon
    \caption{Sampling $G_p$ for our max clique-based graph generative models}
    \label{alg:samplemaxcliquegraph}
    \Input{input graph $G_i$, planting probability $p$, model type (MCEI, MCFD, or MCNI)}
    \Output{sampled graph $G_p$}
    \BlankLine
    initialize the sampled graph $G_p$ to be empty \;
    \ForEach{\normalfont{max clique $M$ $\in$ input graph $G_i$}}
    {
        \If{\normalfont{model type} is \normalfont{MCEI}}{
        \ForEach{\normalfont{edge $e \in M$}}{
            \normalfont{with probability $p$, add $e$ to $G_p$ }\;
        }
        }
        \ElseIf{\normalfont{model type} is \normalfont{MCFD}}{
            \normalfont{with probability $p$, add all edges in $M$ to $G_p$}\;
        }
        \ElseIf{\normalfont{model type} is \normalfont{MCNI}}{
            \ForEach{\normalfont{node $v \in M$}}{
                \normalfont{with probability $\sqrt{p}$, set $v$ to be `active' in $M$}\;
            }
            add edges between all pairs of nodes active in $M$ to $G_p$\;
        }
    }
    \Return{$G_p$}
\end{algorithm}

\subsection{Experimental Setup and Results}

\paragraph{Setup.} We perform an evaluation of different graph generative models under the perspective of the {\em overlap} criterion. 
Specifically, we see how these models match different statistics of an input graph as overlap is varied. We choose 8 statistics that capture both the connectivity of a graph, as well local patterns within it.
We evaluate six models in total: the three models we introduce in Section~\ref{sec:baselines}, as well as three deep learning-based models, CELL~\citep{rendsburgnetgan}, VGAE~\citep{KipfWelling:2016}, and GraphVAE~\citep{simonovsky2018graphvae}, which are EI, NI, and FD, respectively. For our methods, we vary overlap by simply varying the $p$ hyperparameter; for the other methods, we use early stopping or vary the representation dimensionality. Further details of the experimental setup are included in Appendix~\ref{app:expSetting}.
We evaluate on 8 real-world networks and 1 synthetic network. In Figure~\ref{lesmis_metrics}, we include results for \textsc{Les Miserables}~\citep{knuth1993stanford}, a co-appearance network of characters in the eponymous novel; the remaining results are deferred to Appendix~\ref{app:moreStatsPlots}.

\begin{figure}[htb!]
\centering
\includegraphics[width=0.925\linewidth]{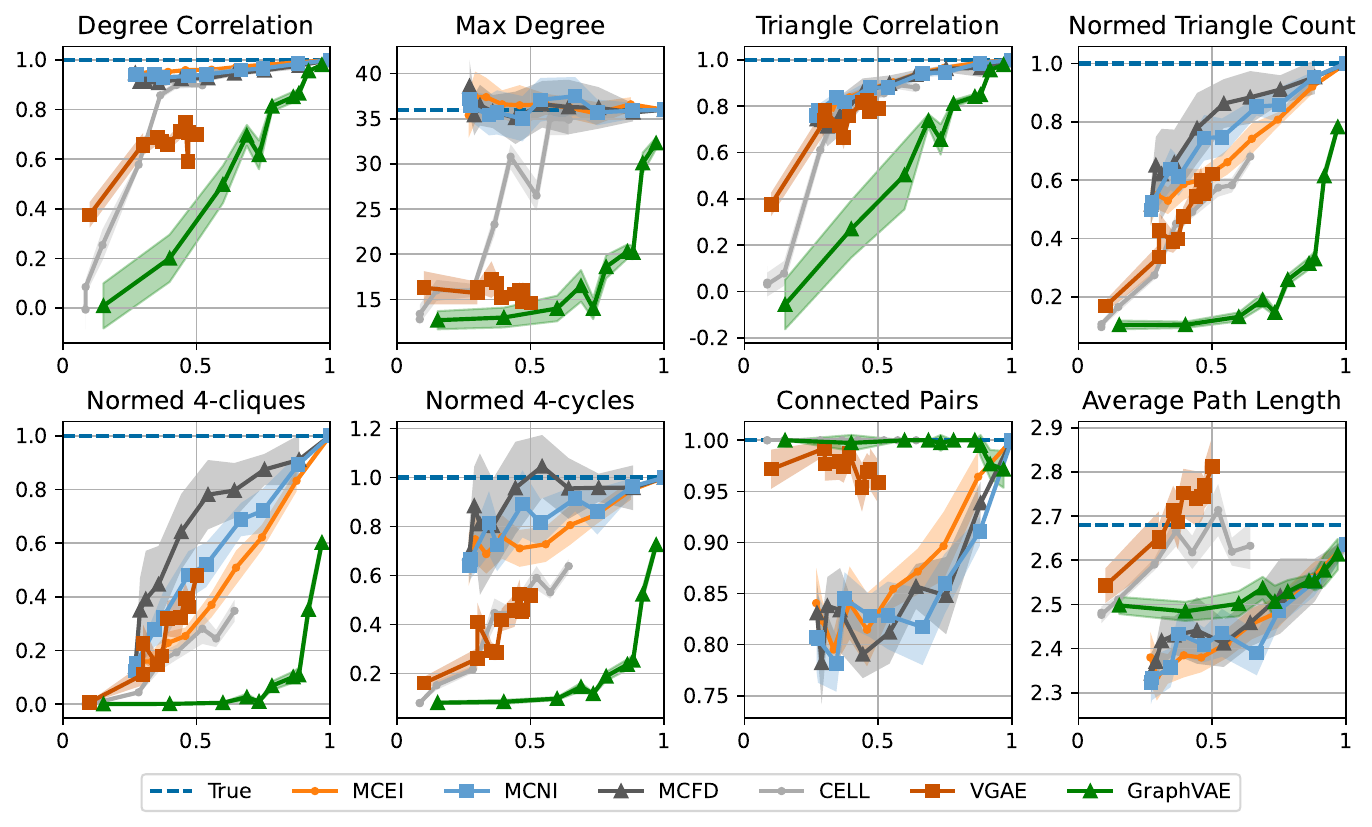}
\caption{Statistics for \textsc{Les Miserables} as a function of the {\em overlap}: max degree; Pearson correlation coefficient of degree and triangle sequences; normalized triangle, 4-clique, and 4-cycle counts; fraction of connected pairs, $\sum_{i}  \binom{|C_i|}{2} / \binom{n}{2}$ where $C_i$ is the $i$\textsuperscript{th} connected component; and characteristic (average) path length.
We plot the mean and 90\% confidence interval across 10 trials.
}
\label{lesmis_metrics}
\end{figure}

\paragraph{Results.}
Our findings highlight the importance of {\em overlap} as a third dimension in the evaluation of graph generative models. We see that deep-learning based models, like GraphVAE and CELL, can almost fit the input graph as we allow the training to be performed for a sufficient number of epochs.
However, when one wants to generate a diverse set of graphs, these models fail to match certain statistics of the input graph. For example, we see in Figure~\ref{lesmis_metrics} that the CELL method generates graphs with a low number of triangles for the \textsc{Les Miserables} dataset when the overlap between the generated graphs is small. We observe similar results for almost all datasets and other statistics, especially those pertaining to small dense subgraphs (like $4$-cliques and $4$-cycles).  Similarly, the GraphVAE method almost always fits the input graph in a high overlap regime. 
Although GraphVAE possesses greater theoretical capacity as an FD generative model, we observe that instances with low overlap deviate significantly from the statistical characteristics of the input  graph,  as illustrated in  Figure~\ref{lesmis_metrics}.  Consequently, this approach fails to achieve generalization when trained on a single graph instance. The VGAE model encounters additional limitations due to its dot-product kernel, which has been demonstrated to have certain constraints in generating graphs with sparsity and triangle density \citep{seshadhri2020impossibility,chanpuriya2020node}. As depicted in Figures~\ref{lesmis_metrics}, we observe that this model fails to adequately capture the characteristics of the input graph, even when trained for many epochs.

Surprisingly, despite their simplicity, the three introduced models exhibit desirable characteristics. Firstly, the overlap can be easily adjusted by increasing the probability hyperparameter $p$, providing more predictability compared to other models (refer to the results for the \textsc{PolBlogs} dataset in the Appendix). Secondly, these models generally generate a higher number of triangles, which closely aligns with the input graph, in comparison to methods such as CELL. For instance, in Figure~\ref{lesmis_metrics}, these baselines produce graphs with a greater number of triangles, $4$-cliques, and $4$-cycles than CELL and GraphVAE, particularly when the overlap is low.
Furthermore, as the level of dependency rises (from edge independent to fully dependent), we observe a higher triangle count for a fixed overlap. This finding supports our theoretical assertions from Section~\ref{sec:hierarchy} regarding the efficacy of different models within the introduced hierarchy. However, a drawback of these two-stage methods is their inability to capture the connectivity patterns of the input graph, as evident from the fraction of connected pairs and the characteristic path length statistics.

\section{Conclusion}
We have proved tight trade-offs for graph generative models between their ability to produce networks that match the high triangle densities of real-world graphs, and their ability to achieve low \emph{overlap} and generate a diverse set of networks. We show that as the models are allowed higher levels of edge dependency, they are able to achieve higher triangle counts with lower overlap, and we formalize this finding by introducing a three-level hierarchy of edge dependency.
An interesting future direction is to refine this hierarchy to be finer-grained, and also to investigate the roles of embedding length and complexity of the embedding distributions. 
We also emphasize our introduction of \emph{overlap} as a third dimension along which to evaluate graph generative models, together with output quality and efficiency. 
We believe these directions can provide a solid groundwork for the systematic theoretical and empirical study of graph generative models.

\bibliography{iclr2024_conference}

\begin{thebibliography}{62}
\providecommand{\natexlab}[1]{#1}
\providecommand{\url}[1]{\texttt{#1}}
\expandafter\ifx\csname urlstyle\endcsname\relax
  \providecommand{\doi}[1]{doi: #1}\else
  \providecommand{\doi}{doi: \begingroup \urlstyle{rm}\Url}\fi

\bibitem[Abbe(2017)]{abbe2017community}
Emmanuel Abbe.
\newblock Community detection and stochastic block models: recent developments.
\newblock \emph{The Journal of Machine Learning Research}, 18\penalty0 (1):\penalty0 6446--6531, 2017.

\bibitem[Adamic \& Glance(2005)Adamic and Glance]{adamic2005political}
Lada~A Adamic and Natalie Glance.
\newblock The political blogosphere and the 2004 us election: divided they blog.
\newblock In \emph{Proceedings of the 3rd International Workshop on Link Discovery}, pp.\  36--43, 2005.

\bibitem[Aiello et~al.(2001)Aiello, Chung, and Lu]{aiello2001random}
William Aiello, Fan Chung, and Linyuan Lu.
\newblock A random graph model for power law graphs.
\newblock \emph{Experimental mathematics}, 10\penalty0 (1):\penalty0 53--66, 2001.

\bibitem[Albert \& Barab{\'a}si(2002)Albert and Barab{\'a}si]{albert2002statistical}
R{\'e}ka Albert and Albert-L{\'a}szl{\'o} Barab{\'a}si.
\newblock Statistical mechanics of complex networks.
\newblock \emph{Reviews of modern physics}, 74\penalty0 (1):\penalty0 47, 2002.

\bibitem[Alon \& Spencer(2016)Alon and Spencer]{alon2016probabilistic}
Noga Alon and Joel~H Spencer.
\newblock \emph{The probabilistic method}.
\newblock John Wiley \& Sons, 2016.

\bibitem[Athreya et~al.(2017)Athreya, Fishkind, Tang, Priebe, Park, Vogelstein, Levin, Lyzinski, and Qin]{athreya2017statistical}
Avanti Athreya, Donniell~E Fishkind, Minh Tang, Carey~E Priebe, Youngser Park, Joshua~T Vogelstein, Keith Levin, Vince Lyzinski, and Yichen Qin.
\newblock Statistical inference on random dot product graphs: a survey.
\newblock \emph{The Journal of Machine Learning Research}, 18\penalty0 (1):\penalty0 8393--8484, 2017.

\bibitem[Bojchevski et~al.(2018)Bojchevski, Shchur, Z{\"u}gner, and G{\"u}nnemann]{bojchevski2018netgan}
Aleksandar Bojchevski, Oleksandr Shchur, Daniel Z{\"u}gner, and Stephan G{\"u}nnemann.
\newblock Net{GAN}: {G}enerating graphs via random walks.
\newblock \emph{\ICML{2018}}, 2018.

\bibitem[Bollob{\'a}s et~al.(2001)Bollob{\'a}s, Riordan, Spencer, and Tusn{\'a}dy]{bollobas2001degree}
B{\'e}la Bollob{\'a}s, Oliver Riordan, Joel Spencer, and G{\'a}bor Tusn{\'a}dy.
\newblock The degree sequence of a scale-free random graph process.
\newblock \emph{Random Structures \& Algorithms}, 18\penalty0 (3):\penalty0 279--290, 2001.

\bibitem[Chanpuriya et~al.(2020)Chanpuriya, Musco, Sotiropoulos, and Tsourakakis]{chanpuriya2020node}
Sudhanshu Chanpuriya, Cameron Musco, Konstantinos Sotiropoulos, and Charalampos Tsourakakis.
\newblock Node embeddings and exact low-rank representations of complex networks.
\newblock \emph{Advances in neural information processing systems}, 33:\penalty0 13185--13198, 2020.

\bibitem[Chanpuriya et~al.(2021)Chanpuriya, Musco, Sotiropoulos, and Tsourakakis]{chanpuriya2021power}
Sudhanshu Chanpuriya, Cameron Musco, Konstantinos Sotiropoulos, and Charalampos Tsourakakis.
\newblock On the power of edge independent graph models.
\newblock \emph{Advances in Neural Information Processing Systems}, 34:\penalty0 24418--24429, 2021.

\bibitem[Chatterjee \& Diaconis(2013)Chatterjee and Diaconis]{chatterjee2013estimating}
Sourav Chatterjee and Persi Diaconis.
\newblock Estimating and understanding exponential random graph models.
\newblock 2013.

\bibitem[Chung \& Lu(2002{\natexlab{a}})Chung and Lu]{ChungLu:2002}
Fan Chung and Linyuan Lu.
\newblock The average distances in random graphs with given expected degrees.
\newblock \emph{Proceedings of the National Academy of Sciences}, 99\penalty0 (25):\penalty0 15879--15882, 2002{\natexlab{a}}.

\bibitem[Chung \& Lu(2002{\natexlab{b}})Chung and Lu]{chung2002connected}
Fan Chung and Linyuan Lu.
\newblock Connected components in random graphs with given expected degree sequences.
\newblock \emph{Annals of combinatorics}, 6\penalty0 (2):\penalty0 125--145, 2002{\natexlab{b}}.

\bibitem[Chung et~al.(2006)Chung, Chung, Graham, Lu, et~al.]{chung2006complex}
Fan Chung, Fan~RK Chung, Fan~Chung Graham, Linyuan Lu, et~al.
\newblock \emph{Complex graphs and networks}.
\newblock Number 107. American Mathematical Soc., 2006.

\bibitem[Chung et~al.(1986)Chung, Graham, Frankl, and Shearer]{chung1986some}
Fan~RK Chung, Ronald~L Graham, Peter Frankl, and James~B Shearer.
\newblock Some intersection theorems for ordered sets and graphs.
\newblock \emph{Journal of Combinatorial Theory, Series A}, 43\penalty0 (1):\penalty0 23--37, 1986.

\bibitem[De~Bie(2011)]{de2011maximum}
Tijl De~Bie.
\newblock Maximum entropy models and subjective interestingness: an application to tiles in binary databases.
\newblock \emph{Data Mining and Knowledge Discovery}, 23\penalty0 (3):\penalty0 407--446, 2011.

\bibitem[De~Cao \& Kipf(2018)De~Cao and Kipf]{De-CaoKipf:2018}
Nicola De~Cao and Thomas Kipf.
\newblock {MolGAN}: {A}n implicit generative model for small molecular graphs.
\newblock \emph{ICML Deep Generative Models Workshop}, 2018.

\bibitem[Dorogovtsev \& Mendes(2003)Dorogovtsev and Mendes]{dorogovtsev2003evolution}
Sergei~N Dorogovtsev and Jos{\'e}~FF Mendes.
\newblock \emph{Evolution of networks: From biological nets to the Internet and WWW}.
\newblock Oxford university press, 2003.

\bibitem[Easley \& Kleinberg(2010)Easley and Kleinberg]{easley2010networks}
David Easley and Jon Kleinberg.
\newblock \emph{Networks, crowds, and markets: Reasoning about a highly connected world}.
\newblock Cambridge university press, 2010.

\bibitem[Eppstein et~al.(2010)Eppstein, L{\"o}ffler, and Strash]{eppstein2010listing}
David Eppstein, Maarten L{\"o}ffler, and Darren Strash.
\newblock Listing all maximal cliques in sparse graphs in near-optimal time.
\newblock \emph{arXiv preprint arXiv:1006.5440}, 2010.

\bibitem[Erd\"{o}s \& R{\'e}nyi(1960)Erd\"{o}s and R{\'e}nyi]{ErdosRenyi:1960}
Paul Erd\"{o}s and Alfr{\'e}d R{\'e}nyi.
\newblock On the evolution of random graphs.
\newblock \emph{Publications of the Mathematical Institute of the Hungarian Academy of Sciences}, 5\penalty0 (1):\penalty0 17--60, 1960.

\bibitem[Finner(1992)]{finner1992generalization}
Helmut Finner.
\newblock A generalization of holder's inequality and some probability inequalities.
\newblock \emph{The Annals of Probability}, pp.\  1893--1901, 1992.

\bibitem[Frank \& Strauss(1986)Frank and Strauss]{frank1986markov}
Ove Frank and David Strauss.
\newblock Markov graphs.
\newblock \emph{Journal of the american Statistical association}, 81\penalty0 (395):\penalty0 832--842, 1986.

\bibitem[Friedgut(2004)]{friedgut2004hypergraphs}
Ehud Friedgut.
\newblock Hypergraphs, entropy, and inequalities.
\newblock \emph{The American Mathematical Monthly}, 111\penalty0 (9):\penalty0 749--760, 2004.

\bibitem[Gilbert(1959)]{gilbert1959random}
Edgar~N Gilbert.
\newblock Random graphs.
\newblock \emph{The Annals of Mathematical Statistics}, 30\penalty0 (4):\penalty0 1141--1144, 1959.

\bibitem[Gionis \& Tsourakakis(2015)Gionis and Tsourakakis]{gionis2015dense}
Aristides Gionis and Charalampos~E Tsourakakis.
\newblock Dense subgraph discovery: Kdd 2015 tutorial.
\newblock In \emph{Proceedings of the 21th ACM SIGKDD International Conference on Knowledge Discovery and Data Mining}, pp.\  2313--2314, 2015.

\bibitem[Gleich et~al.(2004)Gleich, Zhukov, and Berkhin]{gleich2004fast}
David Gleich, Leonid Zhukov, and Pavel Berkhin.
\newblock Fast parallel pagerank: A linear system approach.
\newblock \emph{Yahoo! Research Technical Report}, 13:\penalty0 22, 2004.

\bibitem[Grover \& Leskovec(2016)Grover and Leskovec]{grover2016node2vec}
Aditya Grover and Jure Leskovec.
\newblock node2vec: Scalable feature learning for networks.
\newblock In \emph{Proceedings of the 22nd ACM SIGKDD international conference on Knowledge discovery and data mining}, pp.\  855--864, 2016.

\bibitem[Grover et~al.(2019)Grover, Zweig, and Ermon]{GroverZweigErmon:2019}
Aditya Grover, Aaron Zweig, and Stefano Ermon.
\newblock Graphite: Iterative generative modeling of graphs.
\newblock In \emph{\ICML{2019}}, 2019.

\bibitem[Harris et~al.(2020)Harris, Millman, van~der Walt, Gommers, Virtanen, Cournapeau, Wieser, Taylor, Berg, Smith, Kern, Picus, Hoyer, van Kerkwijk, Brett, Haldane, del R{\'{i}}o, Wiebe, Peterson, G{\'{e}}rard-Marchant, Sheppard, Reddy, Weckesser, Abbasi, Gohlke, and Oliphant]{harris2020array}
Charles~R. Harris, K.~Jarrod Millman, St{\'{e}}fan~J. van~der Walt, Ralf Gommers, Pauli Virtanen, David Cournapeau, Eric Wieser, Julian Taylor, Sebastian Berg, Nathaniel~J. Smith, Robert Kern, Matti Picus, Stephan Hoyer, Marten~H. van Kerkwijk, Matthew Brett, Allan Haldane, Jaime~Fern{\'{a}}ndez del R{\'{i}}o, Mark Wiebe, Pearu Peterson, Pierre G{\'{e}}rard-Marchant, Kevin Sheppard, Tyler Reddy, Warren Weckesser, Hameer Abbasi, Christoph Gohlke, and Travis~E. Oliphant.
\newblock Array programming with {NumPy}.
\newblock \emph{Nature}, 585\penalty0 (7825):\penalty0 357--362, September 2020.
\newblock \doi{10.1038/s41586-020-2649-2}.
\newblock URL \url{https://doi.org/10.1038/s41586-020-2649-2}.

\bibitem[Hoff(2003)]{hoff2003random}
Peter~D Hoff.
\newblock Random effects models for network data.
\newblock In \emph{Dynamic Social Network Modeling and Analysis: Workshop Summary and Papers}. Citeseer, 2003.

\bibitem[Jackson et~al.(2008)]{jackson2008social}
Matthew~O Jackson et~al.
\newblock \emph{Social and economic networks}, volume~3.
\newblock Princeton university press Princeton, 2008.

\bibitem[Jain \& Seshadhri(2020)Jain and Seshadhri]{jain2020power}
Shweta Jain and C~Seshadhri.
\newblock The power of pivoting for exact clique counting.
\newblock In \emph{Proceedings of the 13th International Conference on Web Search and Data Mining}, pp.\  268--276, 2020.

\bibitem[Keliger et~al.(2023)Keliger, Lov{\'a}sz, M{\'o}ri, and {\'O}dor]{keliger2023switchover}
D{\'a}niel Keliger, L{\'a}szl{\'o} Lov{\'a}sz, Tam{\'a}s M{\'o}ri, and Gergely {\'O}dor.
\newblock Switchover phenomenon for general graphs.
\newblock \emph{arXiv preprint arXiv:2304.11971}, 2023.

\bibitem[Kipf \& Welling(2016)Kipf and Welling]{KipfWelling:2016}
Thomas~N Kipf and Max Welling.
\newblock Variational graph auto-encoders.
\newblock \emph{NeurIPS Bayesian Deep Learning Workshop}, 2016.

\bibitem[Knuth(1993)]{knuth1993stanford}
Donald~Ervin Knuth.
\newblock \emph{The Stanford GraphBase: a platform for combinatorial computing}, volume~1.
\newblock AcM Press New York, 1993.

\bibitem[Kolchin \& Chistyakov(1975)Kolchin and Chistyakov]{kolchin1975cyclic}
Valentin~Fedorovich Kolchin and Vladimir~Pavlovich Chistyakov.
\newblock The cyclic structure of random permutations.
\newblock \emph{Mathematical notes of the Academy of Sciences of the USSR}, 18\penalty0 (6):\penalty0 1139--1144, 1975.

\bibitem[LeCun et~al.(2015)LeCun, Bengio, and Hinton]{lecun2015deep}
Yann LeCun, Yoshua Bengio, and Geoffrey Hinton.
\newblock Deep learning.
\newblock \emph{Nature}, 521\penalty0 (7553):\penalty0 436--444, 2015.

\bibitem[Leskovec \& Mcauley(2012)Leskovec and Mcauley]{leskovec2012learning}
Jure Leskovec and Julian~J Mcauley.
\newblock Learning to discover social circles in ego networks.
\newblock In \emph{\NIPS{2012}}, pp.\  539--547, 2012.

\bibitem[Leskovec et~al.(2010{\natexlab{a}})Leskovec, Chakrabarti, Kleinberg, Faloutsos, and Ghahramani]{leskovec2010kronecker}
Jure Leskovec, Deepayan Chakrabarti, Jon Kleinberg, Christos Faloutsos, and Zoubin Ghahramani.
\newblock Kronecker graphs: an approach to modeling networks.
\newblock \emph{Journal of Machine Learning Research}, 11\penalty0 (2), 2010{\natexlab{a}}.

\bibitem[Leskovec et~al.(2010{\natexlab{b}})Leskovec, Huttenlocher, and Kleinberg]{leskovec2010signed}
Jure Leskovec, Daniel Huttenlocher, and Jon Kleinberg.
\newblock Signed networks in social media.
\newblock In \emph{Proceedings of the SIGCHI conference on human factors in computing systems}, pp.\  1361--1370, 2010{\natexlab{b}}.

\bibitem[Newman et~al.(2002)Newman, Watts, and Strogatz]{newman2002random}
Mark~EJ Newman, Duncan~J Watts, and Steven~H Strogatz.
\newblock Random graph models of social networks.
\newblock \emph{Proceedings of the national academy of sciences}, 99\penalty0 (suppl\_1):\penalty0 2566--2572, 2002.

\bibitem[Pan et~al.(2022)Pan, Lin, Cao, Zeng, Yu, He, Nussinov, and Cheng]{pan2022deep}
Xiaoqin Pan, Xuan Lin, Dongsheng Cao, Xiangxiang Zeng, Philip~S Yu, Lifang He, Ruth Nussinov, and Feixiong Cheng.
\newblock Deep learning for drug repurposing: Methods, databases, and applications.
\newblock \emph{Wiley interdisciplinary reviews: Computational molecular science}, 12\penalty0 (4):\penalty0 e1597, 2022.

\bibitem[Perozzi et~al.(2014)Perozzi, Al-Rfou, and Skiena]{perozzi2014deepwalk}
Bryan Perozzi, Rami Al-Rfou, and Steven Skiena.
\newblock Deepwalk: Online learning of social representations.
\newblock In \emph{Proceedings of the 20th ACM SIGKDD international conference on Knowledge discovery and data mining}, pp.\  701--710, 2014.

\bibitem[Qiu et~al.(2018)Qiu, Dong, Ma, Li, Wang, and Tang]{qiu2018network}
Jiezhong Qiu, Yuxiao Dong, Hao Ma, Jian Li, Kuansan Wang, and Jie Tang.
\newblock Network embedding as matrix factorization: Unifying deepwalk, line, pte, and node2vec.
\newblock In \emph{Proceedings of the eleventh ACM international conference on web search and data mining}, pp.\  459--467, 2018.

\bibitem[Rendsburg et~al.(2020)Rendsburg, Heidrich, and von Luxburg]{rendsburgnetgan}
Luca Rendsburg, Holger Heidrich, and Ulrike von Luxburg.
\newblock Net{GAN} without {GAN}: From random walks to low-rank approximations.
\newblock In \emph{\ICML{2020}}, 2020.

\bibitem[Sen et~al.(2008)Sen, Namata, Bilgic, Getoor, Galligher, and Eliassi-Rad]{sen2008collective}
Prithviraj Sen, Galileo Namata, Mustafa Bilgic, Lise Getoor, Brian Galligher, and Tina Eliassi-Rad.
\newblock Collective classification in network data.
\newblock \emph{AI Magazine}, 29\penalty0 (3):\penalty0 93--93, 2008.

\bibitem[Seshadhri et~al.(2020)Seshadhri, Sharma, Stolman, and Goel]{seshadhri2020impossibility}
C~Seshadhri, Aneesh Sharma, Andrew Stolman, and Ashish Goel.
\newblock The impossibility of low-rank representations for triangle-rich complex networks.
\newblock \emph{Proceedings of the National Academy of Sciences}, 117\penalty0 (11):\penalty0 5631--5637, 2020.

\bibitem[Simonovsky \& Komodakis(2018)Simonovsky and Komodakis]{simonovsky2018graphvae}
Martin Simonovsky and Nikos Komodakis.
\newblock Graphvae: Towards generation of small graphs using variational autoencoders.
\newblock In \emph{Artificial Neural Networks and Machine Learning--ICANN 2018: 27th International Conference on Artificial Neural Networks, Rhodes, Greece, October 4-7, 2018, Proceedings, Part I 27}, pp.\  412--422. Springer, 2018.

\bibitem[Simpson et~al.(2011)Simpson, Hayasaka, and Laurienti]{simpson2011exponential}
Sean~L Simpson, Satoru Hayasaka, and Paul~J Laurienti.
\newblock Exponential random graph modeling for complex brain networks.
\newblock \emph{PloS one}, 6\penalty0 (5):\penalty0 e20039, 2011.

\bibitem[Stark et~al.(2010)Stark, Breitkreutz, Chatr-Aryamontri, Boucher, Oughtred, Livstone, Nixon, Van~Auken, Wang, Shi, et~al.]{stark2010biogrid}
Chris Stark, Bobby-Joe Breitkreutz, Andrew Chatr-Aryamontri, Lorrie Boucher, Rose Oughtred, Michael~S Livstone, Julie Nixon, Kimberly Van~Auken, Xiaodong Wang, Xiaoqi Shi, et~al.
\newblock The {BioGRID} interaction database: 2011 update.
\newblock \emph{Nucleic Acids Research}, 39:\penalty0 D698--D704, 2010.

\bibitem[Takeaki(2012)]{takeaki2012implementation}
UNO Takeaki.
\newblock Implementation issues of clique enumeration algorithm.
\newblock \emph{Special issue: Theoretical computer science and discrete mathematics, Progress in Informatics}, 9:\penalty0 25--30, 2012.

\bibitem[Tsitsulin et~al.(2018)Tsitsulin, Mottin, Karras, and M{\"u}ller]{tsitsulin2018verse}
Anton Tsitsulin, Davide Mottin, Panagiotis Karras, and Emmanuel M{\"u}ller.
\newblock Verse: Versatile graph embeddings from similarity measures.
\newblock In \emph{Proceedings of the 2018 world wide web conference}, pp.\  539--548, 2018.

\bibitem[Tsourakakis(2008)]{tsourakakis2008fast}
Charalampos~E Tsourakakis.
\newblock Fast counting of triangles in large real networks without counting: Algorithms and laws.
\newblock In \emph{2008 Eighth IEEE International Conference on Data Mining}, pp.\  608--617. IEEE, 2008.

\bibitem[Tsourakakis et~al.(2017)Tsourakakis, Pachocki, and Mitzenmacher]{tsourakakis2017scalable}
Charalampos~E Tsourakakis, Jakub Pachocki, and Michael Mitzenmacher.
\newblock Scalable motif-aware graph clustering.
\newblock In \emph{Proceedings of the 26th International Conference on World Wide Web}, pp.\  1451--1460, 2017.

\bibitem[Virtanen et~al.(2020)Virtanen, Gommers, Oliphant, Haberland, Reddy, Cournapeau, Burovski, Peterson, Weckesser, Bright, {van der Walt}, Brett, Wilson, Millman, Mayorov, Nelson, Jones, Kern, Larson, Carey, Polat, Feng, Moore, {VanderPlas}, Laxalde, Perktold, Cimrman, Henriksen, Quintero, Harris, Archibald, Ribeiro, Pedregosa, {van Mulbregt}, and {SciPy 1.0 Contributors}]{2020SciPy-NMeth}
Pauli Virtanen, Ralf Gommers, Travis~E. Oliphant, Matt Haberland, Tyler Reddy, David Cournapeau, Evgeni Burovski, Pearu Peterson, Warren Weckesser, Jonathan Bright, St{\'e}fan~J. {van der Walt}, Matthew Brett, Joshua Wilson, K.~Jarrod Millman, Nikolay Mayorov, Andrew R.~J. Nelson, Eric Jones, Robert Kern, Eric Larson, C~J Carey, {\.I}lhan Polat, Yu~Feng, Eric~W. Moore, Jake {VanderPlas}, Denis Laxalde, Josef Perktold, Robert Cimrman, Ian Henriksen, E.~A. Quintero, Charles~R. Harris, Anne~M. Archibald, Ant{\^o}nio~H. Ribeiro, Fabian Pedregosa, Paul {van Mulbregt}, and {SciPy 1.0 Contributors}.
\newblock {{SciPy} 1.0: Fundamental Algorithms for Scientific Computing in Python}.
\newblock \emph{Nature Methods}, 17:\penalty0 261--272, 2020.
\newblock \doi{10.1038/s41592-019-0686-2}.

\bibitem[Wang et~al.(2018)Wang, Wang, Wang, Zhao, Zhang, Zhang, Xie, and Guo]{wang2018graphgan}
Hongwei Wang, Jia Wang, Jialin Wang, Miao Zhao, Weinan Zhang, Fuzheng Zhang, Xing Xie, and Minyi Guo.
\newblock Graphgan: Graph representation learning with generative adversarial nets.
\newblock In \emph{Proceedings of the AAAI conference on artificial intelligence}, volume~32, 2018.

\bibitem[Watts \& Strogatz(1998)Watts and Strogatz]{watts1998collective}
Duncan~J Watts and Steven~H Strogatz.
\newblock Collective dynamics of ‘small-world’networks.
\newblock \emph{nature}, 393\penalty0 (6684):\penalty0 440--442, 1998.

\bibitem[Yin et~al.(2017)Yin, Benson, Leskovec, and Gleich]{yin2017local}
Hao Yin, Austin~R Benson, Jure Leskovec, and David~F Gleich.
\newblock Local higher-order graph clustering.
\newblock In \emph{Proceedings of the 23rd ACM SIGKDD international conference on knowledge discovery and data mining}, pp.\  555--564, 2017.

\bibitem[You et~al.(2018)You, Ying, Ren, Hamilton, and Leskovec]{YouYingRen:2018}
Jiaxuan You, Rex Ying, Xiang Ren, William Hamilton, and Jure Leskovec.
\newblock Graph{RNN}: Generating realistic graphs with deep auto-regressive models.
\newblock In \emph{\ICML{2018}}, pp.\  5708--5717, 2018.

\bibitem[Young \& Scheinerman(2007)Young and Scheinerman]{young2007random}
Stephen~J Young and Edward~R Scheinerman.
\newblock Random dot product graph models for social networks.
\newblock In \emph{Algorithms and Models for the Web-Graph: 5th International Workshop, WAW 2007, San Diego, CA, USA, December 11-12, 2007. Proceedings 5}, pp.\  138--149. Springer, 2007.

\bibitem[Zhu et~al.(2022)Zhu, Du, Wang, Xu, Zhang, Liu, and Wu]{zhu2022survey}
Yanqiao Zhu, Yuanqi Du, Yinkai Wang, Yichen Xu, Jieyu Zhang, Qiang Liu, and Shu Wu.
\newblock A survey on deep graph generation: Methods and applications.
\newblock \emph{arXiv preprint arXiv:2203.06714}, 2022.

\end{thebibliography}
\bibliographystyle{iclr2024_conference}

\appendix

\clearpage
\section{Deferred Proofs and Theoretical Results}

\subsection{Proof of Lemma~\ref{lem:upper_bound}}\label{app:lemmaProof}

We prove Lemma~\ref{lem:upper_bound} using the following theorem from \citet{friedgut2004hypergraphs} (see also  \citet{finner1992generalization}), which can be seen as a continuous version of Inequality~\ref{eqn:friedgutedgeindep}:
\begin{theorem}[\cite{friedgut2004hypergraphs}]
\label{thm:friedgut}
Let $X,Y,Z$ be three independent probability spaces and let
\[
    f:X \times Y \to \mathbb{R}, \quad
    g:Y \times Z \to \mathbb{R}, \quad \text{and} \quad
    h:Z \times X \to \mathbb{R}
\]
be functions that are square-integrable with respect to the relevant product measures. Then
\begin{equation*}
\begin{aligned}
    \int f(x,y)g(y,z)h(z,x) \,dx \,dy \,dz
    \leq \sqrt{\int f^2 (x,y) \,dx \,dy \int g^2 (y,z) \,dy \,dz \int h^2 (z,x) \,dz \,dx} .
\end{aligned}
\end{equation*}
\end{theorem}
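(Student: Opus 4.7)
The plan is to prove this inequality by two successive applications of the Cauchy--Schwarz inequality, exploiting the fact that each of $f,g,h$ depends on exactly two of the three variables $x,y,z$. After fixing one variable, we can isolate one of the three functions against the product of the other two on a two-dimensional product space, and then the independence of the underlying probability spaces lets Fubini separate the remaining squared factors cleanly. This is essentially the continuous analog of Loomis--Whitney for three pairs of indices.

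For the first step, I would fix $x \in X$ and view the inner integral $\int_Y \int_Z f(x,y)\,g(y,z)\,h(z,x)\,dy\,dz$ as the $L^2(Y\times Z)$ pairing of $g(y,z)$ against the product $f(x,y)\,h(z,x)$. Cauchy--Schwarz on $Y \times Z$ then gives
\[
\int\!\!\int f(x,y)\,g(y,z)\,h(z,x)\,dy\,dz \;\le\; \left(\int\!\!\int g(y,z)^2\,dy\,dz\right)^{\!1/2}\!\left(\int\!\!\int f(x,y)^2\,h(z,x)^2\,dy\,dz\right)^{\!1/2}.
\]
Because $f(x,y)^2$ does not depend on $z$ and $h(z,x)^2$ does not depend on $y$, Fubini factors the second factor as $\bigl(\int f(x,y)^2\,dy\bigr)\bigl(\int h(z,x)^2\,dz\bigr)$, which I abbreviate as $F(x)\,H(x)$.

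For the second step, I integrate the resulting bound over $x \in X$. The $g$-factor is a constant with respect to $x$ and pulls out of the integral, leaving
\[
\int\!\!\int\!\!\int f\,g\,h\,dx\,dy\,dz \;\le\; \|g\|_2 \int_X \sqrt{F(x)\,H(x)}\,dx \;\le\; \|g\|_2 \sqrt{\int_X F(x)\,dx \cdot \int_X H(x)\,dx},
\]
where the final inequality is Cauchy--Schwarz applied to the single-variable $x$-integral with integrands $\sqrt{F(x)}$ and $\sqrt{H(x)}$. Recognizing that $\int_X F(x)\,dx = \int\!\!\int f(x,y)^2\,dx\,dy = \|f\|_2^2$, and analogously for $H$, yields exactly the claimed bound.

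The main delicacy is bookkeeping: at each stage one must carefully track which variables remain being integrated and which are held fixed, and verify that the factorization in Step 1 is legitimate. This is precisely where the independence (product-measure) hypothesis enters --- without it, $f(x,y)^2\,h(z,x)^2$ would not cleanly split after integration in $y$ and $z$ with $x$ fixed. Beyond this bookkeeping there is no substantive obstacle: the three-way symmetry of the conclusion across $f,g,h$ confirms that two iterations of Cauchy--Schwarz, one on a two-variable slice and one on the remaining variable, is the natural route. Square-integrability of the three functions ensures that no integral is infinite and all applications of Cauchy--Schwarz and Fubini are justified.
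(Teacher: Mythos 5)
Your proof is correct, and every application of Cauchy--Schwarz and Fubini is legitimate under the square-integrability hypothesis. The paper does not prove the three-variable statement in isolation --- it cites \citet{friedgut2004hypergraphs} --- but it does prove the general $k$-cycle version (Theorem~\ref{thm:k-int} in Section~\ref{sec:integralProof}), of which this is the $k=3$ case, and that proof is likewise a double Cauchy--Schwarz argument; the difference from yours lies in the decomposition. The paper pairs $f(x,y)$ against the partial integral $\int g(y,z)h(z,x)\,dz$ in $L^2(X\times Y)$, and then bounds $\int\!\!\int\bigl(\int g\,h\,dz\bigr)^2\,dx\,dy$ by a second Cauchy--Schwarz in the single variable $z$ (Lemma~\ref{lem:interm}); you instead fix $x$, pair $g(y,z)$ against the product $f(x,y)h(z,x)$ in $L^2(Y\times Z)$ --- where the second factor splits exactly by Fubini into $\bigl(\int f(x,y)^2\,dy\bigr)\bigl(\int h(z,x)^2\,dz\bigr)$ --- and finish with Cauchy--Schwarz in $x$ alone. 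Your route is slightly cleaner for triangles precisely because, once $x$ is fixed, the complement of $g$ is a product of functions of disjoint variables; but that exact factorization is special to $k=3$, since for a $k$-cycle with $k\ge 4$ the remaining factors still chain through shared variables. That is why the paper's ordering (peel off one function on the outer two-variable space, push the rest into an inner square that is bounded inductively) is the one that generalizes to Theorem~\ref{thm:k-int}.
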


\begin{proof}[Proof of Lemma \ref{lem:upper_bound}]
Consider any set of three distinct nodes $(i,j,k)$. Then, by assumption, the embeddings of these nodes $Z_i,Z_j,Z_k$ are independent random variables.
Let $\rho_{Z_i},\rho_{Z_j},\rho_{Z_k}$ be the PDFs of the respective nodes' embeddings. Recall that there exists a symmetric function $e:\mathbb{R}^k \times \mathbb{R}^k \to [0,1]$ of two nodes' embeddings that determines the probability of an edge between them.
Based on this, the probability that nodes $i,j,k$ form a triangle is:
\begin{equation*}\label{eq:triangle_prob}
\E[\bm{A}_{ij} \bm{A}_{jk} \bm{A}_{ik}] = \int  \rho_{Z_i}(z_i) \rho_{Z_j}(z_j) \rho_{Z_k}(z_k) e(z_i,z_j)e(z_j,z_k)e(z_i,z_k) \,dz_i \,dz_j \,dz_k 
\end{equation*}
Now, define $f(z_i,z_j) = \sqrt{\rho_{Z_i}(z_i) \cdot \rho_{Z_j}(z_j)} \cdot e(z_i,z_j)$, $g(z_j,z_k) = \sqrt{\rho_{Z_j}(z_j) \cdot \rho_{Z_k}(z_k)} \cdot e(z_j,z_k)$, and $h(z_i,z_k) = \sqrt{\rho_{Z_i}(z_i) \cdot \rho_{Z_k}(z_k)} \cdot e(z_i,z_k)$,
so that
\begin{equation*}
\E[\bm{A}_{ij} \bm{A}_{jk} \bm{A}_{ik}] = \int 
f(z_i,z_j) g(z_j,z_k) h(z_i,z_k)
\,dz_i \,dz_j \,dz_k 
\end{equation*}
Now, we can apply Theorem~\ref{thm:friedgut}, yielding:
\begin{align*}
  &\E[\bm{A}_{ij} \bm{A}_{jk} \bm{A}_{ik}]
  \leq \sqrt{\int f^2 (z_i,z_j) \,dz_i \,dz_j \int g^2 (z_j,z_k) \,dz_j \,dz_k \int h^2 (z_i,z_k) \,dz_i \,dz_k} \\
  & \newadjustbox[0.99]{ =  \sqrt{\int \rho_{Z_i}(z_i) \rho_{Z_j}(z_j) e^2(z_i,z_j) \,dz_i \,dz_j
  \int \rho_{Z_j}(z_j) \rho_{Z_k}(z_k) e^2(z_j,z_k) \,dz_j \,dz_k
  \int \rho_{Z_i}(z_i) \rho_{Z_k}(z_k) e^2(z_i,z_k)
  \,dz_i \,dz_k} } \\
  & \newadjustbox[0.99]{ \leq  \sqrt{\int \rho_{Z_i}(z_i) \rho_{Z_j}(z_j) e(z_i,z_j) \,dz_i \,z_j
  \int \rho_{Z_j}(z_j) \rho_{Z_k}(z_k) e(z_j,z_k) \,dz_j \,dz_k
  \int \rho_{Z_i}(z_i) \rho_{Z_k}(z_k) e(z_i,z_k)
  \,dz_i \,dz_k}}.
\end{align*}
where the last inequality simply uses the fact that since the image of the function $e$ is $[0,1]$, it must be that $e^2(x,y) \leq e(x,y)$.
Finally, the lemma follows from observing that  
\[ \int \rho_{Z_i}(z_i) \rho_{Z_j}(z_j) e(z_i,z_j) \,dz_i \,dz_j = \E[\bm{A}_{ij}]. \qedhere \] 
\end{proof}

\subsection{Generalized Bounds for Squares and Other \texorpdfstring{$k$}{k}-cycles} 
\label{app:kcycles}

We can extend the prior bounds on triangles to bounds on the expected number of $k$-cycles in graphs sampled from the generative model $\mathcal{A}$ in terms of $\Over(\mathcal{A})$. 
For the adjacency matrix $\bm{A}$ of a graph $G$, let $C_k(\bm{A})$ denote the number of $k$-cycles in $G$.  We can prove the following bounds, where the random graph models from Section~\ref{sec:hierarchy} also provide tight examples for any generalized $k$-cycles.
\begin{theorem}[Bound on Expected $k$-cycles]\label{thm:kcycles}
Let $\bm{A}$ be an adjacency matrix sampled from a graph generative model $\mathcal{A}$, and let $C_k(\bm{A})$ denote the number of $k$-cycles in the graph corresponding to $\bm{A}$. If $\mathcal{A}$ is edge independent, node independent, or fully dependent then $\E[C_k(\bm{A})]$ is bounded above asymptotically by $n^k \cdot \Over(\mathcal{A})^k$, $n^k \cdot \Over(\mathcal{A})^{\sfrac{k}{2}}$, and $n^k \cdot \Over(\mathcal{A})$, respectively.
\end{theorem}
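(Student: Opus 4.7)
The plan is to handle the three levels of the hierarchy separately, following the same template as the triangle proofs in Section~\ref{sec:triproof}, with the triangle replaced throughout by the cycle graph $C_k$.

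The fully dependent case goes through essentially verbatim. Any $k$-cycle on ordered distinct nodes $(i_1,\ldots,i_k)$ requires the edge $\{i_1,i_2\}$, so $\bm{A}_{i_1 i_2}\cdots\bm{A}_{i_k i_1} \leq \bm{A}_{i_1 i_2}$; summing over the $n(n-1)\cdots(n-k+1)$ ordered tuples (which overcounts each undirected cycle by $2k$) and applying Lemma~\ref{lem:vol} gives an upper bound of order $n^{k-2}\Vol(\mathcal{A}) = O(n^k \Over(\mathcal{A}))$. For the edge independent case, write $\bm{P}=\E[\bm{A}]$ and observe that $2k \cdot \E[C_k(\bm{A})] \leq \Tr(\bm{P}^k)$, since the trace counts all closed walks of length $k$ and those using distinct nodes are exactly the oriented $k$-cycles. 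Because $\bm{P}$ is symmetric, the power-mean inequality gives $\Tr(\bm{P}^k) \leq \left(\sum_l \lambda_l^2\right)^{k/2} = \|\bm{P}\|_F^k$, and the identity $\|\bm{P}\|_F^2 = 2\Over(\mathcal{A})\Vol(\mathcal{A})$ combined with Lemma~\ref{lem:vol} yields the desired $O(n^k \Over(\mathcal{A})^k)$.

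The node independent case will be the most delicate and will occupy most of the proof. The key step is to generalize Lemma~\ref{lem:upper_bound} to $k$-cycles in the form
\begin{equation*}
\E[\bm{A}_{i_1 i_2}\bm{A}_{i_2 i_3}\cdots \bm{A}_{i_k i_1}] \leq \prod_{l=1}^{k} \sqrt{\E[\bm{A}_{i_l i_{l+1}}]} .
\end{equation*}
I would follow the script of Appendix~\ref{app:lemmaProof}: express the left-hand side as an integral against the product of the $k$ embedding densities, split each density $\rho_{Z_{i_l}}(z_l)$ into two square-root factors and attach them to the two kernels $e(z_{l-1},z_l)$ and $e(z_l,z_{l+1})$ incident to $z_l$, and then apply a $k$-variable continuous Loomis--Whitney / Friedgut inequality to the resulting $k$ square-integrable functions indexed by the edges of $C_k$. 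Once this edge-level factorization is in hand, Cauchy--Schwarz across the $\Theta(n^k)$ cycle tuples combined with the edge-independent $k$-cycle bound just proved applied to $\bm{P}$ (which shares the overlap of $\mathcal{A}$, via the identity $\Over(\mathcal{A}) = \sum_{i<j} P_{ij}^2/\Vol(\mathcal{A})$ used in the triangle proof) gives $\E[C_k(\bm{A})] \leq \sqrt{O(n^k) \cdot O(n^k \Over(\mathcal{A})^k)} = O(n^k \Over(\mathcal{A})^{k/2})$.

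The main obstacle is the correct generalization of Theorem~\ref{thm:friedgut} from three functions on the triangle to $k$ functions on the cycle $C_k$. The Finner--Friedgut framework accommodates any collection of exponents that forms a fractional edge cover of the associated hypergraph; on $C_k$ the uniform weight $1/2$ on every edge is a cover since each vertex lies in exactly two edges, and the dual $L^2$ norms give precisely the square-root-per-edge factorization above. No new construction is required for tightness: the $G(n,p)$ graph, the complete-or-empty ensemble, and the active-node model from Section~\ref{sec:hierarchy} produce $\Theta(n^k p^k)$, $\Theta(n^k p)$, and $\Theta(n^k p^{k/2})$ expected $k$-cycles respectively, matching each of the three upper bounds up to constants depending only on $k$.
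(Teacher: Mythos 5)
Your proposal is correct, and its fully dependent and node independent arguments essentially coincide with the paper's proof of Theorem~\ref{thm:kcycles}: for FD you bound each cycle indicator by a single edge indicator and apply Lemma~\ref{lem:vol}, and for NI you establish the edge-level factorization $\E[\bm{A}_{i_1i_2}\cdots\bm{A}_{i_ki_1}]\le\prod_l\sqrt{\E[\bm{A}_{i_li_{l+1}}]}$ and then combine Cauchy--Schwarz over the $\Theta(n^k)$ tuples with the EI bound applied to $\bm{P}=\E[\bm{A}]$; your parenthetical justification that this EI surrogate shares the overlap of $\mathcal{A}$ (because two independent samples give $\E[\bm{A}_{ij}\bm{A}'_{ij}]=\bm{P}_{ij}^2$ for \emph{any} model) is exactly the fact the paper uses implicitly. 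You deviate in two places. First, for the EI case you argue spectrally, via $2k\,\E[C_k(\bm{A})]\le\Tr(\bm{P}^k)\le\lVert\bm{P}\rVert_F^k=\left(2\,\Over(\mathcal{A})\Vol(\mathcal{A})\right)^{k/2}$ and Lemma~\ref{lem:vol}, whereas the paper uses the $k$-matrix generalization of Inequality~\ref{eqn:friedgutedgeindep}; both yield $O(n^k\Over(\mathcal{A})^k)$, and your route is essentially the eigenvalue argument the paper attributes to \citet{chanpuriya2021power}, with the minor caveat that for odd $k$ you should pass through $\sum_l\lambda_l^k\le\sum_l|\lambda_l|^k$ before invoking superadditivity of $x\mapsto x^{k/2}$. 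Second, for the cyclic $k$-function integral inequality you appeal to the Finner fractional-edge-cover framework (uniform weight $\sfrac{1}{2}$ on the edges of $C_k$, each vertex lying in exactly two edges), whereas the paper proves the needed inequality self-containedly as Theorem~\ref{thm:k-int} in Section~\ref{sec:integralProof} by induction on $k$ using only Cauchy--Schwarz. Citing \citet{finner1992generalization} is a legitimate shortcut; the paper's induction buys a self-contained, elementary proof, while your framing makes transparent why the exponent $\sfrac{1}{2}$ per edge is the right one and would extend to other motifs with a fractional edge cover.
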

\begin{proof}
For notational simplicity, we focus on $k = 4$. The proof directly extends to general $k$. Let $C_4(G)$ be the number of non-backtracking 4-cycles in $G$ (i.e. squares), which can be written as
\begin{align*}
\E_{\bm{A} \sim \mathcal{A}} \left [C_4(\bm{A}) \right ] = \frac{1}{8} \cdot \sum_{i =1}^n \sum_{j \in [n] \setminus \{i\}} \sum_{k \in [n]\setminus\{i,j\}} \sum_{\ell \in [n]\setminus \{i,j,k\}} \bm{A}_{ij} \bm{A}_{jk} \bm{A}_{k \ell} \bm{A}_{\ell i}.
\end{align*}
 The $\sfrac{1}{8}$ factor accounts for the fact that in the sum, each square is counted $8$ times -- once for each potential starting vector $i$ and once of each direction it may be traversed. For general $k$-cycles this factor would be $\frac{1}{2k}$. We then can bound
\begin{align*}
\E_{\bm{A} \sim \mathcal{A}} \left [C_4(\bm{A}) \right ]  \le \frac{1}{8} \cdot\sum_{i \in [n]} \sum_{j \in [n]} \sum_{k \in [n]} \sum_{\ell \in [n]} \bm{A}_{ij} \bm{A}_{jk} \bm{A}_{k \ell} \bm{A}_{\ell i}.
\end{align*}
If $\mathcal{A}$ is an edge independent model, the bound on the expected number of \mbox{$4$-cycles} proceeds like the one for triangles, except using the following variant of Cauchy-Schwarz:
\begin{equation*}
\sum\nolimits_{ijkl} a_{ij} b_{jk} c_{kl} d_{li} \leq \sqrt{ \sum\nolimits_{ij} a_{ij}^2 \sum\nolimits_{ij} b_{ij}^2 \sum\nolimits_{ij} c_{ij}^2 \sum\nolimits_{ij} d_{ij}^2} ,
\end{equation*}
which, letting $\bm{P}=\E[\bm{A}]$ be the edge probability matrix for $\bm{A} \sim \mathcal{A}$, yields
\begin{align*}
\E [ C_4(\bm{A}) ]
&= \tfrac{1}{8} \sum\nolimits_{ijkl} \bm{P}_{ij} \bm{P}_{jk} \bm{P}_{kl} \bm{P}_{li}
\leq \tfrac{1}{8} \sqrt{ \left( \sum\nolimits_{ij} \bm{P}_{ij}^2 \right)^4 }\\
&= \tfrac{1}{8} \left( 2 \cdot \Over(\mathcal{A}) \Vol(\mathcal{A}) \right)^{\sfrac{4}{2}}
= O\left( n^4 \Over(\mathcal{A})^4 \right),
\end{align*}
where again the last step is applying Lemma~\ref{lem:vol}.

If $\mathcal{A}$ is a fully dependent model, the proof of the bound carries through almost exactly:
\begin{align*} 
    \E [C_4(\bm{A})] 
    &= \sum_{i<j<k<l} \E \left[  \bm{A}_{ij} \bm{A}_{jk} \bm{A}_{kl} \bm{A}_{li} \right]
    \leq \sum_{i<j<k<l} \E [ \bm{A}_{ij} ] \\
    &= O(n^2) \sum\nolimits_{i<j} \E [ \bm{A}_{ij} ]
    = O(n^2) \cdot \Vol(\mathcal{A})
    \leq O(n^4) \cdot \Over(\mathcal{A}) .\qedhere
\end{align*}

If $\mathcal{A}$ is a node independent model, the bound on the expected number of \mbox{$4$-cycles} follows as before, except using the following extended version of Theorem~\ref{thm:friedgut} (that we prove in Sec.~\ref{sec:integralProof}):
\begin{equation*}
\begin{aligned}
    & \int f(x,y)g(y,z)h(z,w)l(w,x) \,dx \,dy \,dz \,dw \\
    & \leq \sqrt{\int f^2 (x,y) \,dx \,dy \int g^2 (y,z) \,dy \,dz \int h^2 (z,w) \,dz \,dw \int l^2 (w,x) \,dw \,dx} .
\end{aligned}
\end{equation*}
Applying this equation as before yields
\[ \E[\bm{A}_{ij} \bm{A}_{jk} \bm{A}_{kl} \bm{A}_{li}] \leq \sqrt{\E[\bm{A}_{ij}] \E[\bm{A}_{jk}] \E[\bm{A}_{kl}] \E[\bm{A}_{li}]}, \]
which allows us to apply the edge independent bound for squares:
\begin{align*}
\E [ C_4(\bm{A}) ]
&= \sum_{i<j<k<l} \E[\bm{A}_{ij} \bm{A}_{jk} \bm{A}_{kl} \bm{A}_{li}]
\leq \sum_{i<j<k<l} \sqrt{\E[\bm{A}_{ij}] \E[\bm{A}_{jk}] \E[\bm{A}_{kl}] \E[\bm{A}_{li}]} \\
&\leq \sqrt{ \tbinom{n}{4} \sum\nolimits_{i<j<k<l} \E[\bm{A}_{ij}] \E[\bm{A}_{jk}] \E[\bm{A}_{kl}] \E[\bm{A}_{li}] } \\
&\leq \sqrt{ O\left( (n^4)^2 \cdot \Over(\mathcal{A})^4 \right)}
= O( n^4 \Over(\mathcal{A})^{\sfrac{4}{2}} ).
\end{align*}
\
\end{proof}

\subsection{Integral Inequality and Proof}
\label{sec:integralProof}

Here we prove the following theorem that was referenced in Section~\ref{sec:triproof}:

\begin{theorem}\label{thm:k-int}
For square integrable functions $f_1, \ldots f_k$, we have:
\begin{align*}
\begin{split}
&\int_{x_1} \int_{x_2} \ldots \int_{x_k} f_1 (x_1 ,x_2 ) \cdot f_2 (x_2,x_3) \cdot \ldots \cdot f_k(x_k,x_1)  \text{ } dx_1 \ldots dx_k \\
 &\leq \sqrt{\int_{x_1} \int_{x_2} f_1^2 (x_1 ,x_2 ) dx_1 dx_2 \cdot \ldots \cdot \int_{x_k} \int_{x_1} f_k^2 (x_k ,x_1 ) dx_k dx_1}
\end{split}
\end{align*}
\end{theorem}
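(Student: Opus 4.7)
}

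The plan is to prove the inequality by induction on $k$, using Friedgut's inequality (Theorem~\ref{thm:friedgut}) as the base case at $k=3$ (the $k=2$ case being just ordinary Cauchy-Schwarz on $L^2$, treating $f_2(x_2,x_1)$ as an element of $L^2(dx_1\,dx_2)$). The key idea is that the cyclic structure of the integrand lets us \emph{contract} an adjacent pair of edges in the cycle to reduce a $k$-cycle integral to a $(k-1)$-cycle integral, at the cost of introducing a single composite function $g$.

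Concretely, given the $k$-cycle integral
\[
I = \int\!\cdots\!\int f_1(x_1,x_2)\,f_2(x_2,x_3)\cdots f_k(x_k,x_1)\,dx_1\cdots dx_k,
\]
I would define
\[
g(x_1,x_3) \;\coloneqq\; \int f_1(x_1,x_2)\,f_2(x_2,x_3)\,dx_2,
\]
so that $I$ becomes the integral corresponding to the $(k-1)$-cycle $x_1\to x_3\to x_4\to\cdots\to x_k\to x_1$ with functions $g,f_3,f_4,\ldots,f_k$. By the inductive hypothesis applied to this $(k-1)$-cycle,
\[
I \;\leq\; \sqrt{\,\int\!\!\int g^2(x_1,x_3)\,dx_1\,dx_3 \;\cdot\; \prod_{i=3}^{k}\int\!\!\int f_i^2\,}\,.
\]

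The next step is to bound $\int\!\int g^2$ by $\int f_1^2 \cdot \int f_2^2$. This comes from the pointwise Cauchy-Schwarz estimate
\[
g^2(x_1,x_3) \;=\; \Bigl(\int f_1(x_1,x_2)f_2(x_2,x_3)\,dx_2\Bigr)^{\!2} \;\leq\; \Bigl(\int f_1^2(x_1,x_2)\,dx_2\Bigr)\Bigl(\int f_2^2(x_2,x_3)\,dx_2\Bigr),
\]
followed by integrating both sides in $x_1$ and $x_3$ and using Fubini, which factorizes the right-hand side into $\bigl(\int\!\int f_1^2\,dx_1\,dx_2\bigr)\bigl(\int\!\int f_2^2\,dx_2\,dx_3\bigr)$. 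Substituting this bound into the previous display yields the desired inequality.

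The main thing to watch is the bookkeeping of indices in the induction: one must verify that after contracting $x_2$, the integrand genuinely has the form of a $(k-1)$-cycle with the correct adjacency pattern, so that the inductive hypothesis applies cleanly. Beyond that, nothing deep is needed beyond Theorem~\ref{thm:friedgut} and Cauchy-Schwarz, so the proof should be quite short.
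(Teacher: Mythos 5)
Your proof is correct, but it is organized differently from the paper's. The paper opens the cycle at the edge $f_1$: it applies Cauchy--Schwarz in $L^2(dx_1\,dx_2)$ to pair $f_1$ against the open-path kernel $\int_{x_3}\cdots\int_{x_k} f_2(x_2,x_3)\cdots f_k(x_k,x_1)\,dx_3\cdots dx_k$, and then invokes a separate lemma (Lemma~\ref{lem:interm}, proved by its own induction on the path length) asserting that the squared $L^2$ norm of this path kernel is at most $\prod_{i\ge 2}\iint f_i^2$. You instead induct directly on the cycle length, contracting the adjacent pair $f_1,f_2$ into the composite kernel $g(x_1,x_3)=\int f_1(x_1,x_2)f_2(x_2,x_3)\,dx_2$ and using the pointwise Cauchy--Schwarz bound $\iint g^2\le \iint f_1^2\cdot\iint f_2^2$ (submultiplicativity of the Hilbert--Schmidt norm under kernel composition); this bound also certifies that $g$ is square integrable, so the inductive hypothesis legitimately applies to the resulting $(k-1)$-cycle. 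Your route is arguably more economical: a single induction suffices, and you could even start the base case at $k=2$ (plain Cauchy--Schwarz), in which case Theorem~\ref{thm:friedgut} becomes a corollary of your argument rather than an input. What the paper's route buys is the standalone open-path estimate of Lemma~\ref{lem:interm}, which is of some independent use. One shared caveat, at the same level of rigor as the paper: rearranging the iterated integrals when forming $g$ requires Fubini, which can be justified by first running the identical argument on $|f_1|,\ldots,|f_k|$ (or by nonnegativity of the functions in the intended application).
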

Before we prove this theorem, we first prove an intermediate result that will be useful:
\begin{lemma} \label{lem:interm}
For $k \geq 3$, we have:
\begin{align*}
&\int_{x_1} \int_{x_2} \left( \int_{x_3} \ldots \int_{x_k} f_2(x_2,x_3) \cdot \ldots \cdot f_k(x_k,x_1) dx_3 \ldots dx_k \right)^2 dx_1 dx_2 \\ 
&\leq \int_{x_2} \int_{x_3} f_2^2(x_2,x_3) dx_2 dx_3 \cdot \ldots \cdot \int_{x_k} \int_{x_1} f_k^2(x_k,x_1) dx_k dx_1
\end{align*}
\end{lemma}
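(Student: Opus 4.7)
\quad I would prove Lemma~\ref{lem:interm} by induction on $k \ge 3$, peeling off one function at a time using the Cauchy--Schwarz inequality applied in an appropriate ``joining'' variable. The base case $k = 3$ is essentially a single application: the inner integral equals $\int_{x_3} f_2(x_2, x_3)\, f_3(x_3, x_1)\, dx_3$, and for each fixed $(x_1, x_2)$, Cauchy--Schwarz in $x_3$ bounds its square by $\int_{x_3} f_2^2(x_2, x_3)\, dx_3 \cdot \int_{x_3} f_3^2(x_3, x_1)\, dx_3$. Since the first factor depends only on $x_2$ and the second only on $x_1$, integrating over $(x_1, x_2)$ separates the two integrals into exactly the product on the right-hand side.

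For the inductive step, the key move is to introduce the abbreviation
\[
G(x_3, x_1) := \int_{x_4}\!\cdots\!\int_{x_k} f_3(x_3, x_4)\, f_4(x_4, x_5) \cdots f_k(x_k, x_1)\, dx_4\cdots dx_k,
\]
so that the inner integral in the lemma equals $\int_{x_3} f_2(x_2, x_3)\, G(x_3, x_1)\, dx_3$. Squaring this and applying Cauchy--Schwarz in $x_3$, then integrating over $x_1$ and $x_2$ and using that the two resulting factors depend on disjoint outer variables, the left-hand side of the lemma is bounded by
\[
\left(\int\!\!\int f_2^2(x_2, x_3)\, dx_2\, dx_3\right) \cdot \left(\int_{x_1}\!\int_{x_3} G(x_3, x_1)^2\, dx_3\, dx_1\right).
\]

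The crux is to recognize that the second factor is itself an instance of the lemma for a chain of length $k-1$: its outer variables are now $(x_1, x_3)$, and its chain of functions is $f_3(x_3, x_4), f_4(x_4, x_5), \ldots, f_k(x_k, x_1)$, so $x_3$ has simply taken over the boundary role previously played by $x_2$. The inductive hypothesis then upper bounds that factor by $\prod_{i=3}^{k} \int\!\int f_i^2\, dx_i\, dx_{i+1}$ (with indices read cyclically, so that $x_{k+1} = x_1$), and multiplying by the $f_2^2$ factor completes the induction.

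I expect the main (minor) obstacle to be the bookkeeping for the relabelling in the inductive step: one must verify that $x_1$ remains an ``outer'' variable at every level of recursion while the role of $x_2$ is successively handed off to $x_3, x_4, \ldots$, and that the cyclic closure $f_k(x_k, x_1)$ is preserved throughout. Once this is set up cleanly, the proof is just two invocations of Cauchy--Schwarz per level, with no further calculation required.
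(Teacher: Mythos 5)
Your proposal is correct and is essentially identical to the paper's own proof of Lemma~\ref{lem:interm}: induction on $k$, with the base case being a single Cauchy--Schwarz in $x_3$ followed by separation of the $x_1$- and $x_2$-dependent factors, and the inductive step peeling off $f_2$ via Cauchy--Schwarz in $x_3$ and applying the hypothesis to the shortened chain in which $x_3$ inherits the role of $x_2$. The relabelling bookkeeping you flag is exactly how the paper handles it, so no gap remains.
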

\begin{proof}
We proceed using induction. For $k=3$, we have
\begin{align*}
&\int_{x_1} \int_{x_2} \left(\int_{x_3} f_2(x_2,x_3) \cdot f_3(x_3,x_1) dx_3 \right)^2 dx_1 dx_2\\
 &\leq  \int_{x_1} \int_{x_2} \int_{x_3} f_2^2(x_2,x_3) dx_3 \int_{x_3} f_3^2(x_3,x_1) dx_3 dx_2 dx_1 \\
&= \int_{x_2} \int_{x_3} f_2^2(x_2,x_3) dx_2 dx_3 \cdot \int_{x_1} \int_{x_3} f_3^2(x_3,x_1) dx_3 dx_1 ,
\end{align*}
where the first inequality follows from Cauchy-Schwarz. \\
Now assume the statement is true for some $k>3$. We will prove the induction step for $k+1$:
\begin{equation*}
\newadjustbox[0.99]{
\begin{aligned}
& \int_{x_1} \int_{x_2} \left( \int_{x_3} \ldots \int_{x_{k+1}} f_2(x_2,x_3) \cdot \ldots \cdot f_{k+1}(x_{k+1},x_1) dx_3 \ldots dx_{k+1} \right)^2 dx_1 dx_2 \\
&= \int_{x_1} \int_{x_2} \left( \int_{x_3} f_2 (x_2,x_3) \left(\int_{x_4} \int_{x_{k+1}} f_2(x_2,x_3) \cdot \ldots \cdot f_{k+1}(x_{k+1},x_1) \ldots dx_{k+1} \right) dx_3 \right)^2  dx_1 dx_2  \\
&\leq \int_{x_2} \int_{x_3} f_2^2(x_2,x_3) dx_3 dx_2 \int_{x_1} \int_{x_3} \left( \int_{x_4} \ldots \int_{x_{k+1}} f_3 (x_3,x_4) \ldots f_{k+1}(x_{k+1},x_1) dx_4 \ldots dx_{k+1} \right)^2 dx_3 dx_1 \\
&\leq \int_{x_2} \int_{x_3} f_2^2(x_2,x_3) dx_2 dx_3 \cdot \ldots \cdot \int_{x_{k+1}} \int_{x_1} f_{k+1}^2(x_{k+1},x_1) dx_{k+1} dx_1 ,
\end{aligned}
}
\end{equation*}
where the first inequality follows from Cauchy-Schwarz, and the last one from the inductive hypothesis.
\end{proof}
Now, we are ready to prove Theorem~\ref{thm:k-int}.
\begin{proof}
\begin{equation*}
\newadjustbox[0.99]{
\begin{aligned}
& \int_{x_1} \int_{x_2} \ldots \int_{x_k} f_1 (x_1 ,x_2 ) \cdot f_2 (x_2,x_3) \cdot \ldots \cdot f_k(x_k,x_1)  \text{ } dx_1 \ldots dx_k \\
&= \int_{x_1} \int_{x_2} f_1 (x_1,x_2) \left( \int_{x_3} \ldots \int_{x_k} f_2 (x_2,x_3) \cdot \ldots \cdot f_k(x_k,x_1) dx_3 \ldots dx_k \right) dx_2 dx_1 \\
&\leq  \sqrt{\int_{x_1} \int_{x_2} f_1^2 (x_1,x_2) dx_1 dx_2 \int_{x_1} \int_{x_2} \left( \int_{x_3} \ldots \int_{x_k} f_2 (x_2,x_3) \cdot \ldots \cdot f_k(x_k,x_1) dx_3 \ldots dx_k \right)^2 dx_1 dx_2}  \\
&\leq  \sqrt{\int_{x_1} \int_{x_2} f_1^2 (x_1 ,x_2 ) dx_1 dx_2 \cdot \ldots \cdot \int_{x_k} \int_{x_1} f_k^2 (x_k ,x_1 ) dx_k dx_1} ,
\end{aligned}
}
\end{equation*}
where the first inequality follow from Cauchy-Schwarz and the second from Lemma~\ref{lem:interm}.
\end{proof}

\section{Further Details on Our Baseline Graph Generative Models}\label{app:baselinedetails}

We provide additional details on our new graph generative models that were introduced in Section~\ref{sec:baselines}. Recall that our models are based on sampling a primary graph $G_p$ based on the maximal cliques of the input graph $G_i$, then a `residual' graph $G_r$ based on the node degrees of $G_i$, and finally returning the union of $G_p$ and $G_r$.
Specifically, we produce $G_r$ by sampling from a simple edge independent model, the \textit{odds product model}~\citep{hoff2003random, de2011maximum, chanpuriya2021power}. This is as a variant of the well-known Chung-Lu configuration model~\citep{aiello2001random, ChungLu:2002, chung2002connected}, which produces graphs that match an input degree sequence in expectation. The odds product model does the same, except without constraints on the input degree sequence. In this work, we generalize this model to produce a sampled graph $G_r$ such that its union with a $G_p$ matches the input graph's degree distribution in expectation. Here we discuss some more details about fitting and sampling from the odds product model to produce $G_r$.

In the odds product model, there is a logit $\ell \in \mathbb{R}$ assigned to each node, and the probability of an edge occurring between two nodes $i$ and $j$ rises with the sum of their logits, and is given by $\sigma(\ell_i + \ell_j)$, where $\sigma$ is the logistic function. To fit this model, we find a vector $\bm{\ell} \in \mathbb{R}^n$ of $n$ logits, one for each node, such that a sample from the model has the same expected node degrees (i.e., row and column sums) as the original graph. This model is edge independent, so to use it, we simply find the expected adjacency matrix of edge probabilities, then sample entries independently to produce a graph.

We use largely the same algorithm as \cite{chanpuriya2021power} to fit the model, but with some modifications, since our goal is slightly more complex. Recall that in the context of this work, we actually sample from this model to produce a `residual' graph with adjacency matrix $\bm{A_r}$; we then return the union $\bm{A_u}$ of this graph with the primary sampled graph $\bm{A_p}$, with the goal that, in expectation, $\bm{A_u}$ has the same node degrees as the input graph $\bm{A_i}$. (Note that this is a strict generalization of the same problem without the primary sampled graph, which is apparent from setting $\bm{A_p}$ to be all zeros.) Denote the degree sequence of the input graph and the expected degree sequence of the union graph by $\bm{d} = \bm{A_i} \bm{1} \in \mathbb{R}^n$ and $\bm{\hat{d}} = \E[\bm{A_u}] \bm{1} \in \mathbb{R}^n$, respectively.
%
%
%
%
%
Fitting the model is a root-finding problem: we seek $\bm{\ell} \in \mathbb{R}^n$ such that the degree errors are zero, that is, $\bm{\hat{d}} - \bm{d} = \bm{0}$. We employ the multivariate Newton-Raphson method to find the root, for which we must calculate the Jacobian matrix $\bm{J}$ of derivatives of the degree errors with respect to the entries of $\bm{\ell}$.
The following calculation largely mirrors that of \cite{chanpuriya2021power}, though we must account for the union of graphs. If an edge is planted in the primary sample with probability $p$ and in the residual sample with probability $q$, then it exists in the union with probability $1 - (1-p)(1-q)$. In our notation,
\begin{align*}
    \E[{\bm{A_u}}_{ij}]
    &= 1 - \big(1-\E[{\bm{A_p}}_{ij}]\big) \cdot \big(1 - \E[{\bm{A_r}}_{ij}]\big) \\
    &= 1 - \big(1-\E[{\bm{A_p}}_{ij}]\big) \cdot \big(1 - \sigma( \bm{\ell}_i + \bm{\ell}_j )\big).
\end{align*}
Letting $\delta_{ij}$ be $1$ if $i=j$ and $0$ otherwise (i.e. the Kronecker delta), we have

\begin{align*}
\tfrac{\partial \bm{\hat{d}}_i}{\partial \bm{\ell}_j}
&= \tfrac{\partial}{\partial \bm{\ell}_j} \sum\nolimits_{k \in [n]} \E[{\bm{A_u}}_{ik}] \\
&= \tfrac{\partial}{\partial \bm{\ell}_j} \sum\nolimits_{k \in [n]} \left( 1 - \big(1-\E[{\bm{A_p}}_{ik}]\big) \cdot \big(1 - \sigma( \bm{\ell}_i + \bm{\ell}_k )\big)\right) \\
&= \tfrac{\partial}{\partial \bm{\ell}_j} \left( 1 - \big(1-\E[{\bm{A_p}}_{ij}]\big) \cdot \big(1 - \sigma( \bm{\ell}_i + \bm{\ell}_j )\big)\right)\\
&\phantom{=}+ \delta_{ij} \sum\nolimits_{k \in [n]} \tfrac{\partial}{\partial \bm{\ell}_i} \left( 1 - \big(1-\E[{\bm{A_p}}_{ik}]\big) \cdot \big(1 - \sigma( \bm{\ell}_i + \bm{\ell}_k )\big)\right) \\
&= \big(1-\E[{\bm{A_p}}_{ij}]\big) \cdot \sigma( \bm{\ell}_i + \bm{\ell}_j ) \cdot \big(1 - \sigma( \bm{\ell}_i + \bm{\ell}_j )\big) \\
&\phantom{=}+ \delta_{ij} \sum\nolimits_{k \in [n]} \big(1-\E[{\bm{A_p}}_{ik}]\big) \cdot \sigma( \bm{\ell}_i + \bm{\ell}_k ) \cdot \big(1 - \sigma( \bm{\ell}_i + \bm{\ell}_k )\big) \\
&= \big(1-\E[{\bm{A_u}}_{ij}]\big) \cdot \sigma( \bm{\ell}_i + \bm{\ell}_j )
+ \delta_{ij} \sum\nolimits_{k \in [n]} \big(1-\E[{\bm{A_u}}_{ik}]\big) \cdot \sigma( \bm{\ell}_i + \bm{\ell}_k ) \\
&= \E[{\bm{A_r}}_{ij}] \cdot \big(1-\E[{\bm{A_u}}_{ij}]\big)
+ \delta_{ij} \sum\nolimits_{k \in [n]} \E[{\bm{A_r}}_{ij}] \cdot \big(1-\E[{\bm{A_u}}_{ik}]\big).
\end{align*}
The expression for the gradient in this context closely resembles the one presented in \cite{chanpuriya2021power}, with the only variation being the substitution of $\E[\bm{A_u}]$ matrices with additional $\E[\bm{A_r}]$ matrices. This substitution is reasonable since the referenced work does not involve a primary sampled graph, and therefore the union graph corresponds precisely to the residual graph. This modified gradient translates directly into a modified fitting algorithm, pseudocode for which is given in Algorithm~\ref{alg:fitoddsproductforunion}.

\begin{algorithm}
    \SetKwInOut{Input}{input}
    \SetKwInOut{Output}{output}
    \SetKw{Return}{return}
    \SetKwComment{Comment}{$\triangleright$ }{}
    \DontPrintSemicolon
    \caption{Fitting the modified odds-product model}
    \label{alg:fitoddsproductforunion}
    \Input{target degrees $\bm{d} \in \mathbb{R}^n$, primary expected adjacency $\E[\bm{A_p}] \in [0,1]^{n \times n}$, error threshold $\epsilon$ }
    \Output{symmetric probability matrix $\E[\bm{A_r}] \in [0,1]^{n \times n}$}
    \BlankLine
    $\bm{\ell} \gets \bm{0}$ \Comment*[r]{$\bm{\ell} \in \mathbb{R}^n$ is the vector of logits, initialized to all zeros }
    $\E[\bm{A_r}] \gets \sigma\left( \bm{\ell} \bm{1}^\top +  \bm{1} \bm{\ell}^\top \right)$ \Comment*[r]{$\sigma$ is an entrywise logistic function, $\bm{1}$ is a length-$n$ all-ones column vector}
    $\E[\bm{A_u}] \gets \bm{1} \bm{1}^\top -  \left( \bm{1} \bm{1}^\top - 
\E[\bm{A_p}] \right) \circ \left( \bm{1} \bm{1}^\top - \E[\bm{A_r}] \right) $ \Comment*[r]{$\circ$ is an entrywise product}
    $\bm{\hat{d}} \gets \E[\bm{A_u}] \bm{1}$  \Comment*[r]{expected degree sequence of $\E[\bm{A_u}]$}
    \While{$\lVert{ \bm{\hat{d}} - \bm{d} }\rVert_2 > \epsilon$}{
	$\bm{B} \gets \E[{\bm{A_r}}] \circ \left( \bm{1} \bm{1}^\top  - \E[{\bm{A_u}}] \right)$ \;
	$\bm{J} \gets \bm{B} + \text{diag}\left(\bm{B} \bm{1} \right)$ \Comment*[r]{$\text{diag}$ is the diagonal matrix with the input vector along its diagonal}
        $\bm{\ell} \gets \bm{\ell} - \bm{J}^{-1} \left(\bm{\hat{d}} - \bm{d} \right)$ \Comment*[r]{rather than inverting $\bm{J}$, we solve this linear system}
    $\E[\bm{A_r}] \gets \sigma\left( \bm{\ell} \bm{1}^\top +  \bm{1} \bm{\ell}^\top \right)$ \;
    $\E[\bm{A_u}] \gets \bm{1} \bm{1}^\top -  \left( \bm{1} \bm{1}^\top - 
\E[\bm{A_p}] \right) \circ \left( \bm{1} \bm{1}^\top - \E[\bm{A_r}] \right) $ \;
    $\bm{\hat{d}} \gets \E[\bm{A_u}] \bm{1}$  \;
    }
    \Return{$\E[\bm{A_r}]$}
\end{algorithm}	

\section{Further Details on Experimental Setting}
\label{app:expSetting}

We now expand on the discussion of our experimental setting from Section~\ref{sec:experiments}.

\paragraph{Statistics.} We choose 8 statistics that capture both the connectivity of a graph, as well local patterns within it. We look at the following network statistics:
\begin{itemize}[leftmargin=0.5\leftmargin]
\item Pearson correlation coefficient (PCC) of the input and generated degree sequences (number of nodes incident to each node), as well as max degree.
\item PCC of the input and generated triangle sequences (number of triangles each node belongs to).
\item Normalized triangle, $4$-clique, and $4$-cycle counts.
\item Characteristic (average) path length and fraction of node pairs which are connected. Letting $|C_i|$ be the size of $i$-th connected component, the latter quantity is $\sum_{i}  \binom{|C_i|}{2} / \binom{n}{2}$).
\end{itemize}

\paragraph{Datasets.}
In our experiments we use the following eight publicly available datasets (together with one synthetic dataset) that we describe next. Table~\ref{sample-table} also provides a summary of them.
\begin{itemize}
\item \textsc{Citeseer}: A graph of papers from six scientific categories and the citations among them
\item \textsc{Cora}: A collection of scientific publications and the citations among them.
\item \textsc{PPI}: A subgraph of the PPI network for Homo Sapiens. Vertices represent proteins and edges represent interactions.
\item \textsc{PolBlogs}: A collection of political blogs and the links between them.
\item \textsc{Web-Edu}:  A web-graph from educational institutions.
\end{itemize}
We also include the following smaller sized datasets where we additional evaluate the GraphVAE method, as it is designed for datasets of that scale.
\begin{itemize}
\item \textsc{Les Miserables}: Co-appearance network of characters in the novel Les Miserables.
\item \textsc{Facebook-Ego}: A small-sized ego-network of Facebook users
\item \textsc{Wiki-Elect}: A (temporal) network of Wikipedia users voting in administrator elections for or against other users to be promoted to administrator. We use the first 15\% of edges.
\item \textsc{Ring of Cliques}: A union of $10$ cliques of size $10$ connected in a ring.
\end{itemize}

\begin{table}[htb!]
  \caption{Dataset summaries}
  \label{sample-table}
  \centering
  \begin{tabular}{lrrr}
    \toprule
    \textbf{Dataset}  & \textbf{Nodes}           & \textbf{Edges}       & \textbf{Triangles} \\
    \midrule
    \textsc{Citeseer}~\citep{sen2008collective} & 2,110  & 3,668 & 1,083 \\
    \textsc{Cora}~\citep{sen2008collective}     & 2,485 & 5,069 & 1,558 \\
    \textsc{PPI}~\citep{stark2010biogrid} & 3,852 & 37,841 & 91,461 \\
    \textsc{PolBlogs}~\citep{adamic2005political} & 1,222 & 16,714 & 101,043\\
    \textsc{Web-Edu}~\citep{gleich2004fast} & 3,031 & 6,474 & 10,058 \\
    \midrule
    \textsc{Les Miserables}~\citep{knuth1993stanford} & 77 & 254 & 467 \\
    \textsc{Wiki-Elect}~\citep{leskovec2010signed} & 367 & 1,475 & 2,249 \\
    \textsc{Facebook-Ego}~\citep{leskovec2012learning} & 324 & 2,514 & 10,740 \\
    \textsc{Ring of Cliques} (synthetic dataset) & 100 & 4,600 & 1,200 \\
    \bottomrule
  \end{tabular}
\end{table}

\paragraph{Hyperparameters and ranging overlap.}
We range overlap in the following way for each method:
\begin{itemize}[leftmargin=0.5\leftmargin]
\item MCEI, MCNI, MCFD: We range, respectively, the probability $p$ of including an edge of a clique, or node of a clique, or a clique in the sampled graph. We typically use evenly spaced numbers for $p$ over the interval between $0$ and $1$. For graphs with larger number of maximal cliques (PolBlogs and PPI), we additionally take the square of these values to decrease the probability.
\item CELL method: We set the dimension of the embeddings at $32$ for the larger datasets and $8$ for the smaller ones. In order to range overlap, we follow the same early-stopping approach as in the original paper. Namely, we stop training at every iteration and sample $10$ graphs every time the overlap between the generated graphs and the input graph exceeds some value (we set $10$ equally spaced thresholds between $0.05$ and $0.75$).
\item GraphVAE: We use $512$ dimensions for the graph-level embedding that precedes the MLP and $32$ dimensions for the hidden layers of the graph autoencoder. Again, we follow the early-stopping approach during training. We use the following implementation: \url{https://github.com/kiarashza/graphvae-mm}.
\item VGAE: As this method relies on a dot-product kernel that has been shown to be unable to generate sparse and triangle-dense graphs, instead of fixing the embedding dimensions at a small number, we increase the dimensions of the hidden layers from $4$ to $1024$ and train for $5,000$ epochs. We use the publicly available implementation by the authors of this method: \url{https://github.com/tkipf/gae}.
\end{itemize}

\paragraph{Implementation.}
Our implementation of the methods we introduce is written in Python and uses the NumPy~\citep{harris2020array} and SciPy~\citep{2020SciPy-NMeth} packages. Additionally, to calculate the various graph metrics, we use the following packages: MACE (MAximal Clique Enumerator)~\citep{takeaki2012implementation} for maximal clique enumeration (it takes $O(|V||E|)$ time for each maximal clique) and Pivoter~\citep{jain2020power} for $k$-clique counting.

\section{Additional Statistics Plots}

\makeatletter
\newenvironment{appfigure}
  {\par\vspace{\fill}\centering
   \begin{adjustwidth}{-.12\textwidth}{-.12\textwidth}%
   \def\@captype{figure}%
   \setkeys{Gin}{width=.6\textwidth}}
  {\end{adjustwidth}\par}
\makeatother

\label{app:moreStatsPlots}
We also include plots for the statistics of the remaining networks: \textsc{CiteSeer}, \textsc{Cora}, \textsc{PolBlogs}, \textsc{Web-Edu}, \textsc{WikiElect}, \textsc{Facebook-Ego}, and the synthetic \textsc{Ring of Cliques} graph. The results largely follow the same pattern as results for the dataset presented in Section~\ref{sec:experiments}. Note that our baseline methods typically outperform other methods in generating a graphs with a higher number of triangles in a low overlap regime. This is especially more evident in sparse networks, e.g., compare \textsc{Cora} with \textsc{PolBlogs}.

\begin{figure}[H]
\centering
\includegraphics[width=0.75\linewidth]{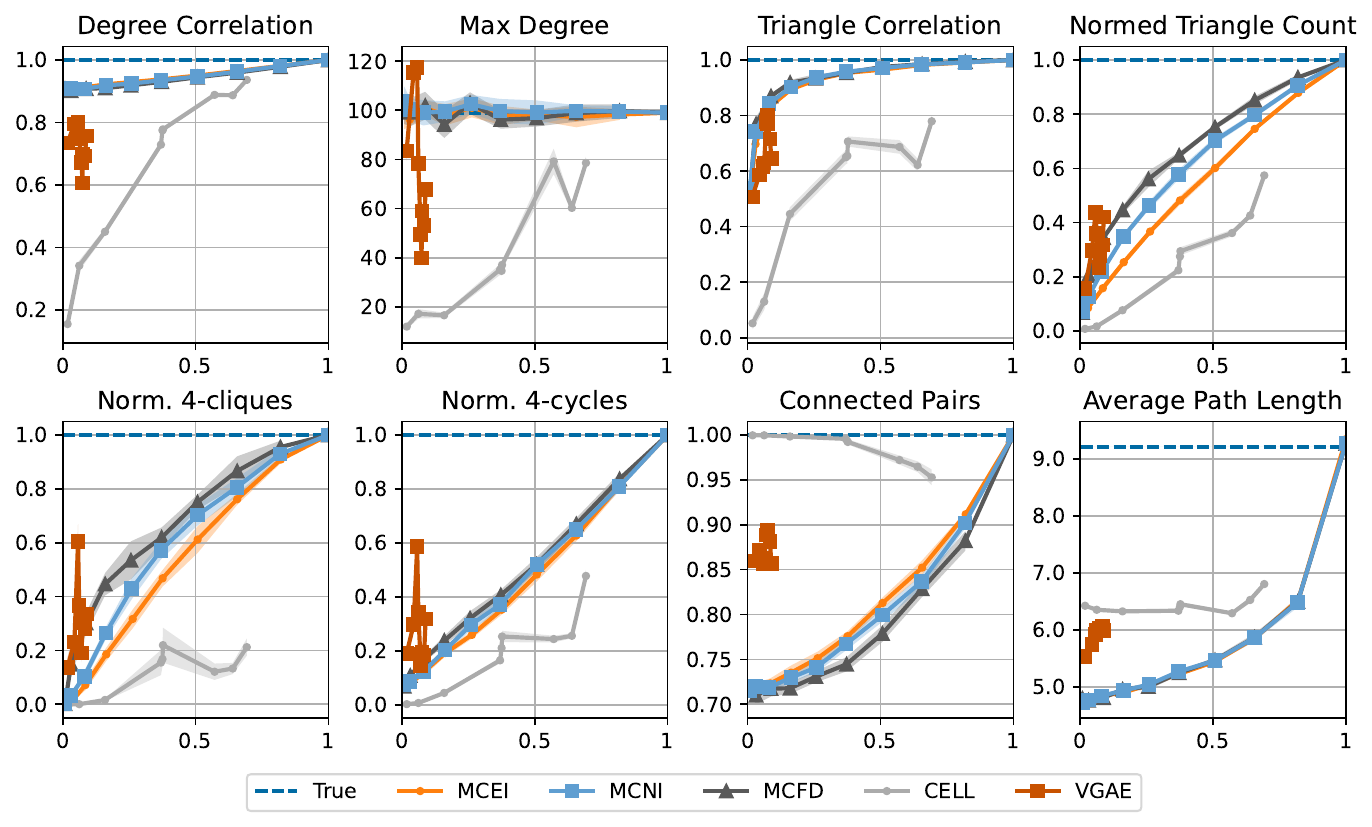}
\caption{Statistics for \textsc{CiteSeer} as a function of the {\em overlap}.}
\label{citeseer_metrics}
\end{figure}

\begin{figure}[H]
\centering
\includegraphics[width=0.75\linewidth]{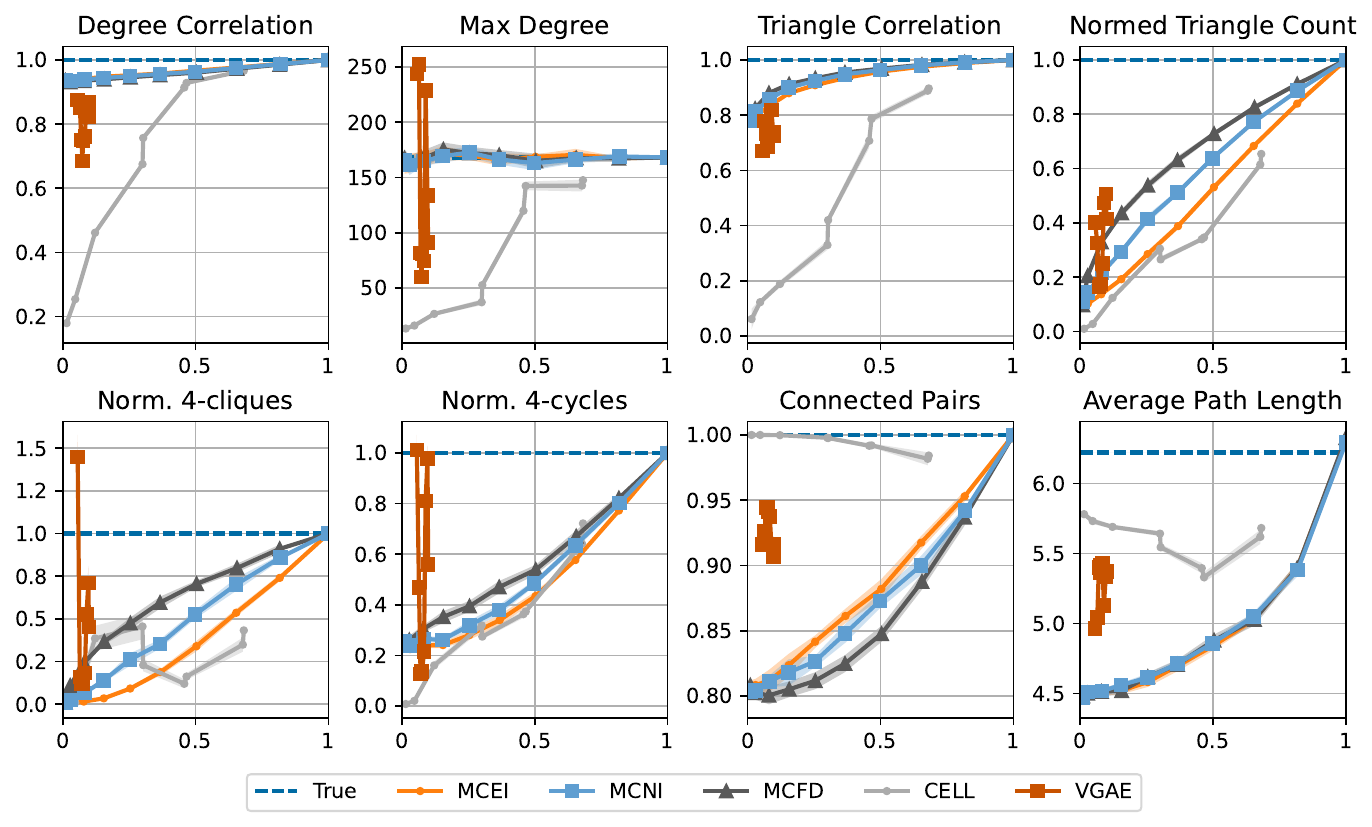}
\caption{Statistics for \textsc{Cora} as a function of the {\em overlap}.}
\label{cora_metrics}
\end{figure}

\begin{figure}[H]
\centering
\includegraphics[width=0.75\linewidth]{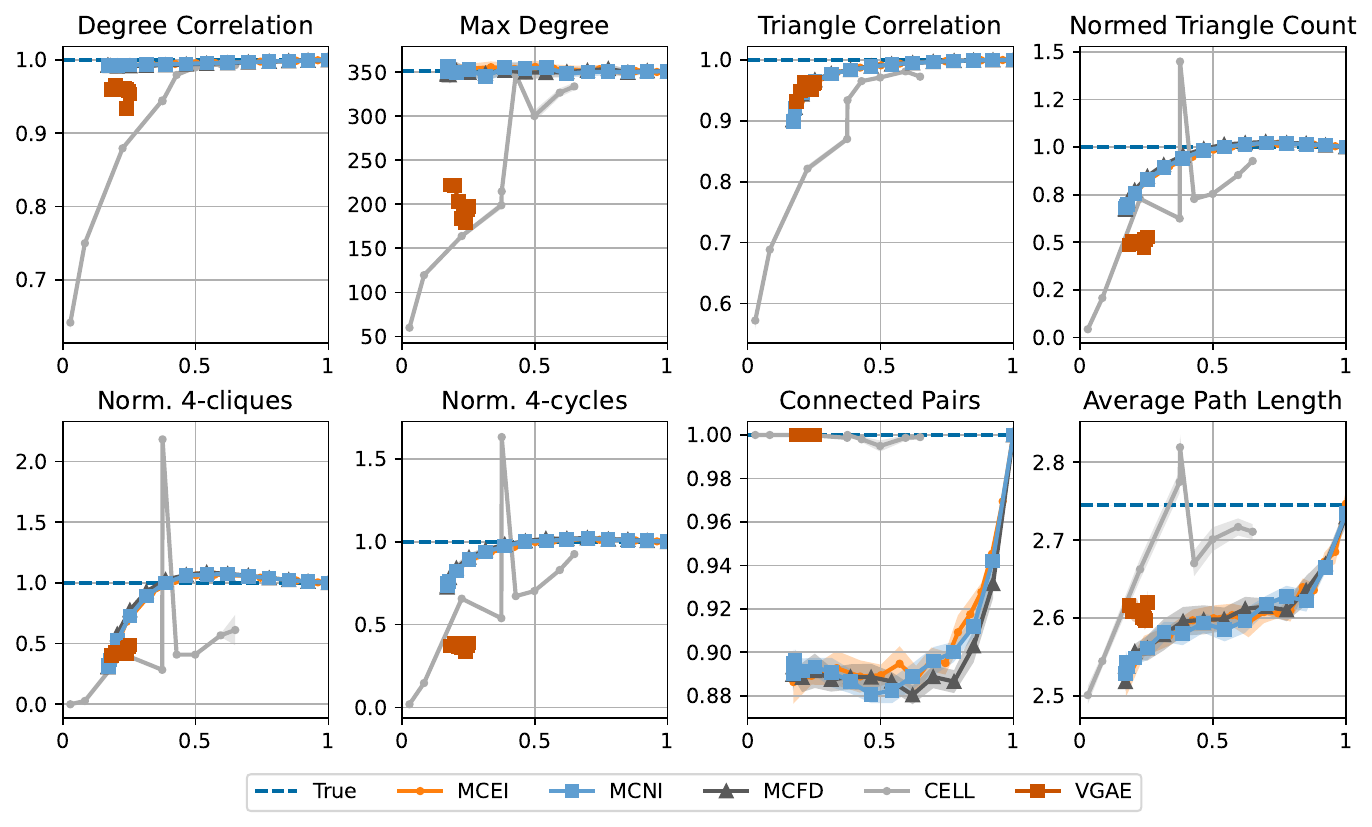}
\caption{Statistics for \textsc{PolBlogs} as a function of the {\em overlap}.}
\label{polblogs_metrics}
\end{figure}

\begin{figure}[H]
\centering
\includegraphics[width=0.75\linewidth]{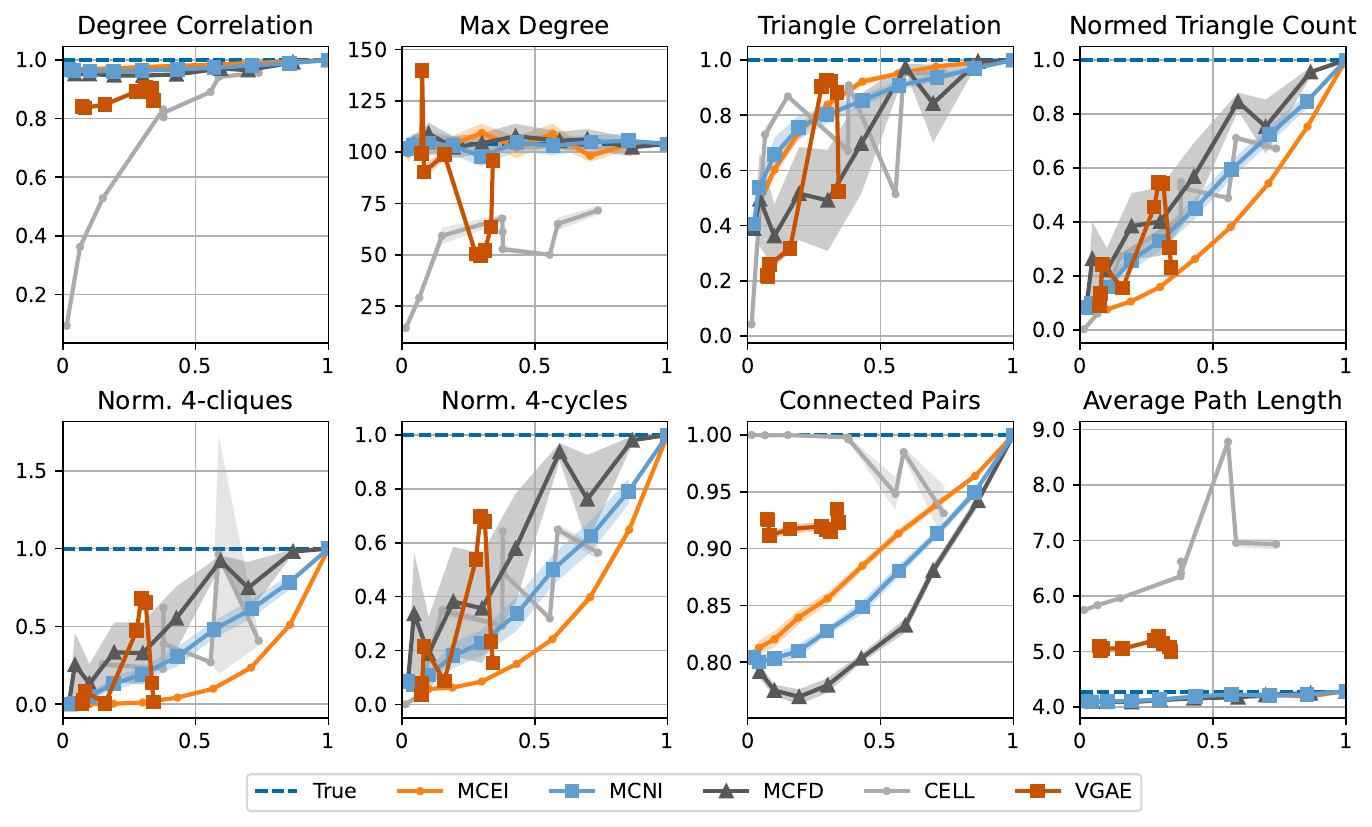}
\caption{Statistics for \textsc{Web-Edu} as a function of the {\em overlap}.}
\label{webedu_metrics}
\end{figure}

\begin{figure}[H]
\centering
\includegraphics[width=0.75\linewidth]{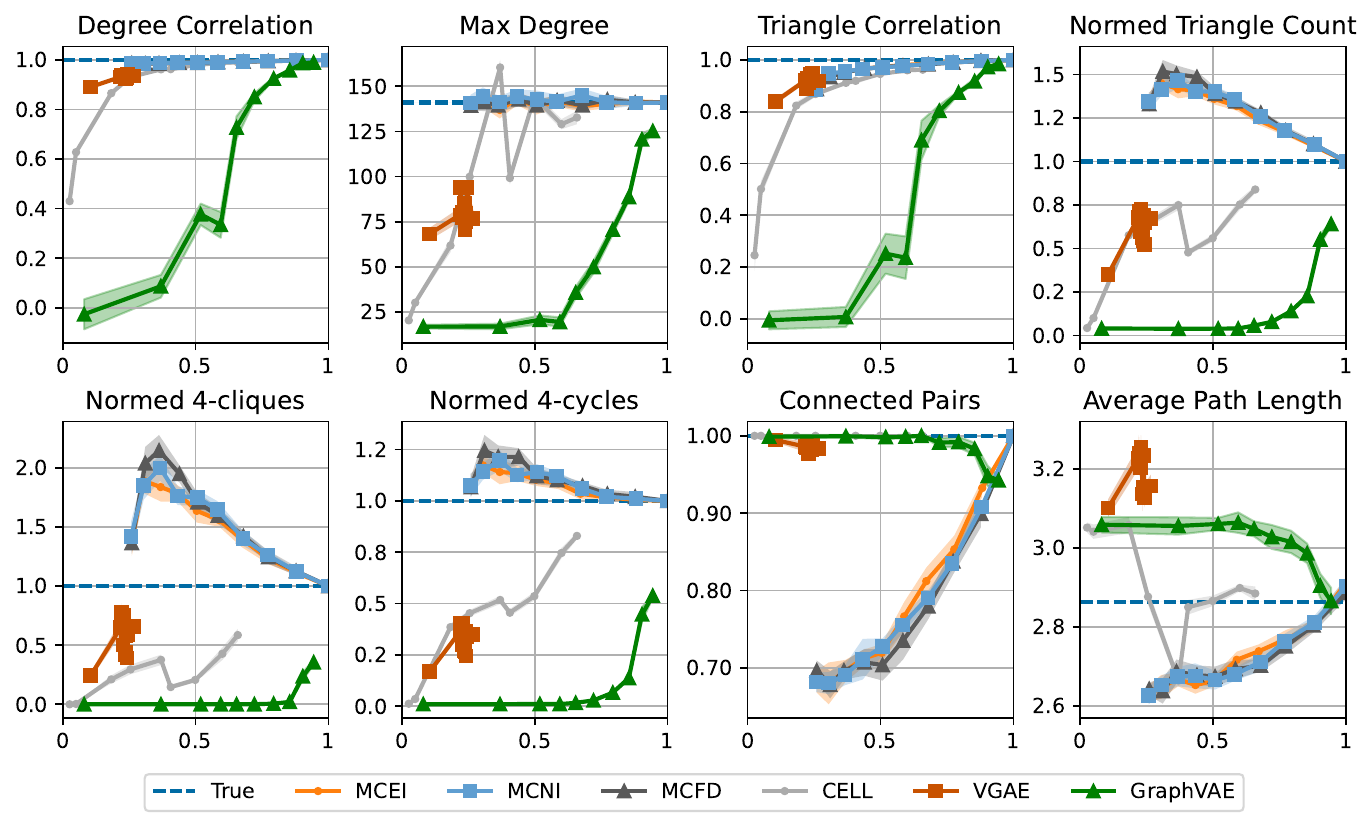}
\caption{Statistics for \textsc{WikiElect} as a function of the {\em overlap}.}
\label{wiki_metrics}
\end{figure}

\begin{figure}[H]
\centering
\includegraphics[width=0.75\linewidth]{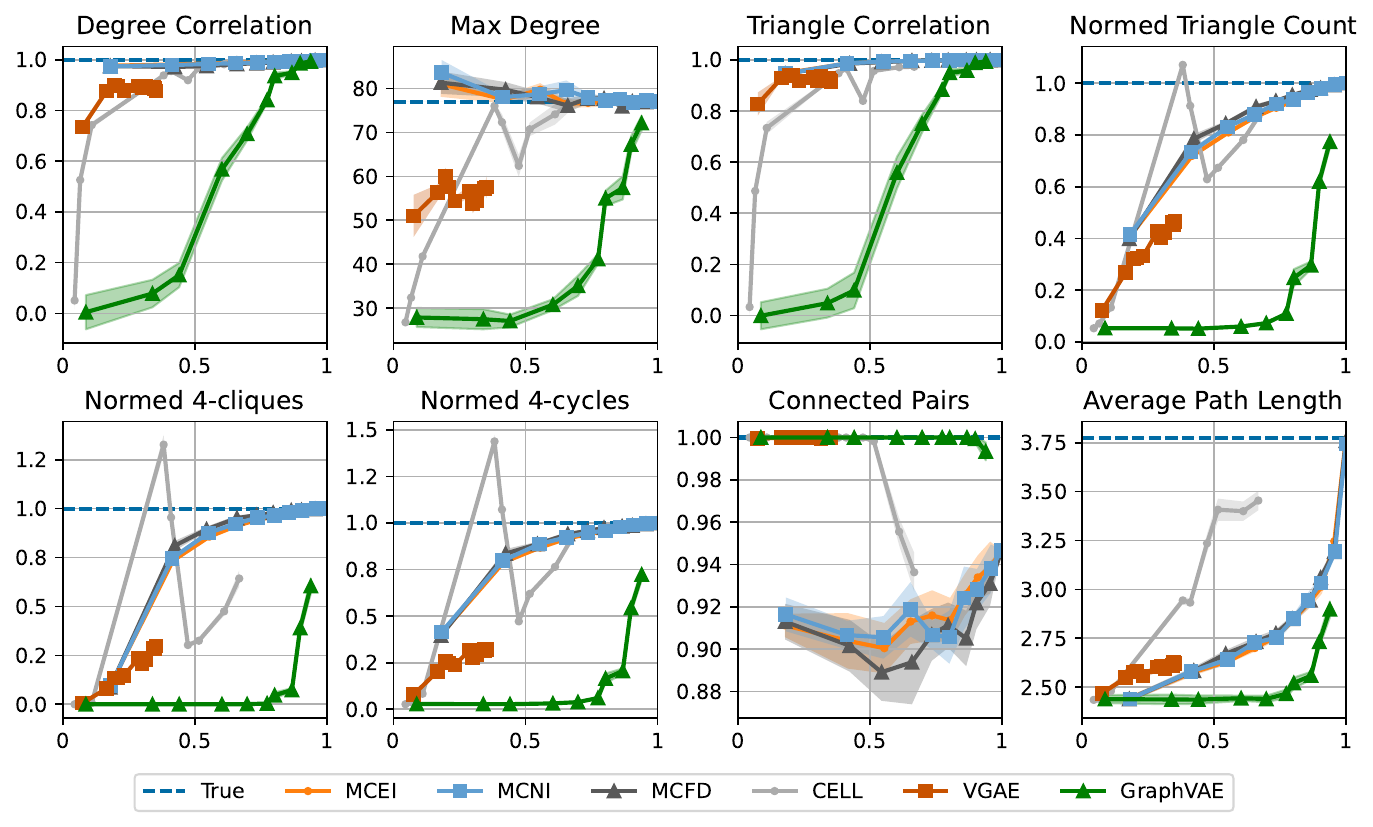}
\caption{Statistics for \textsc{Facebook-Ego} as a function of the {\em overlap}.}
\label{facebook_metrics}
\end{figure}

\begin{figure}[H]
\centering
\includegraphics[width=0.75\linewidth]{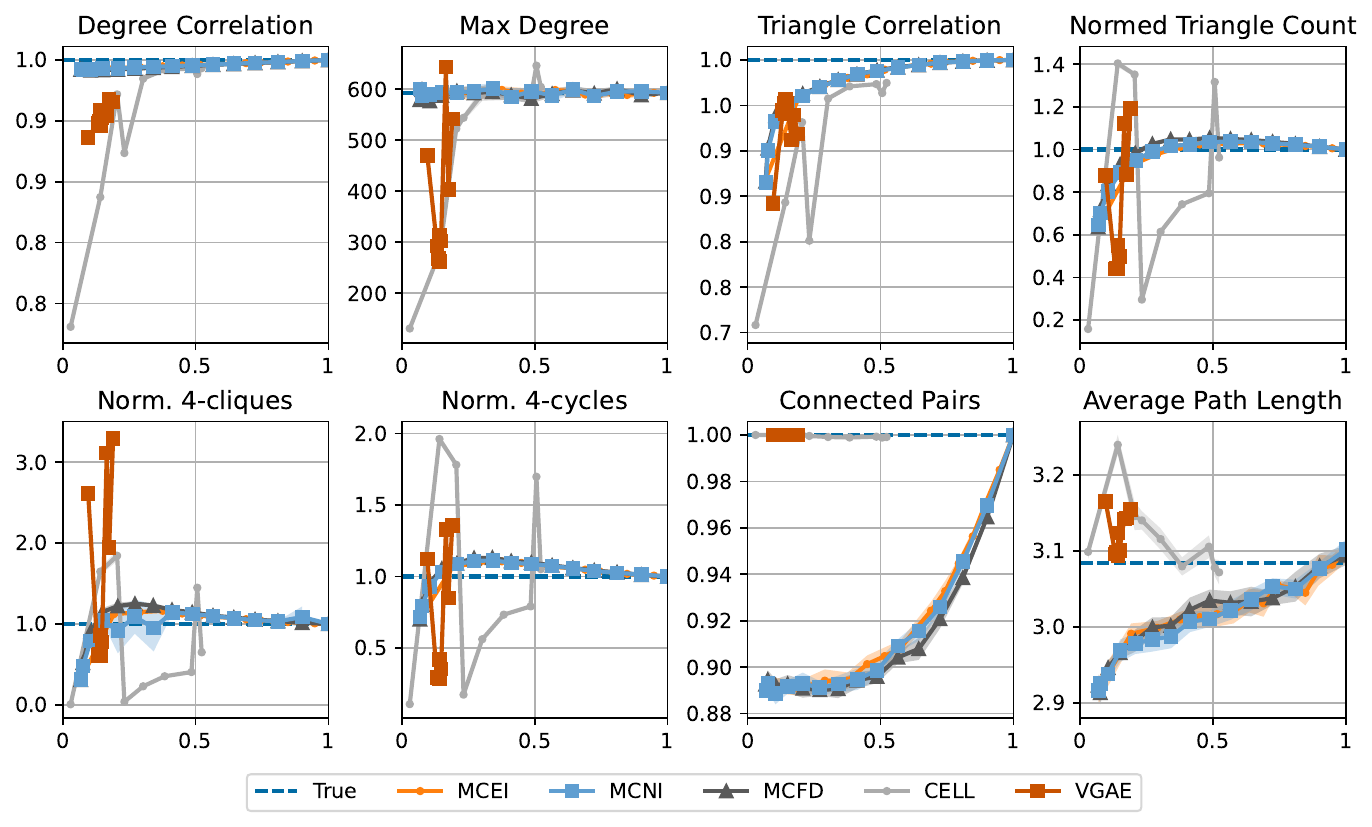}
\caption{Statistics for \textsc{PPI} as a function of the {\em overlap}.}
\label{ppi_metrics}
\end{figure}

\begin{figure}[H]
\centering
\includegraphics[width=0.75\linewidth]{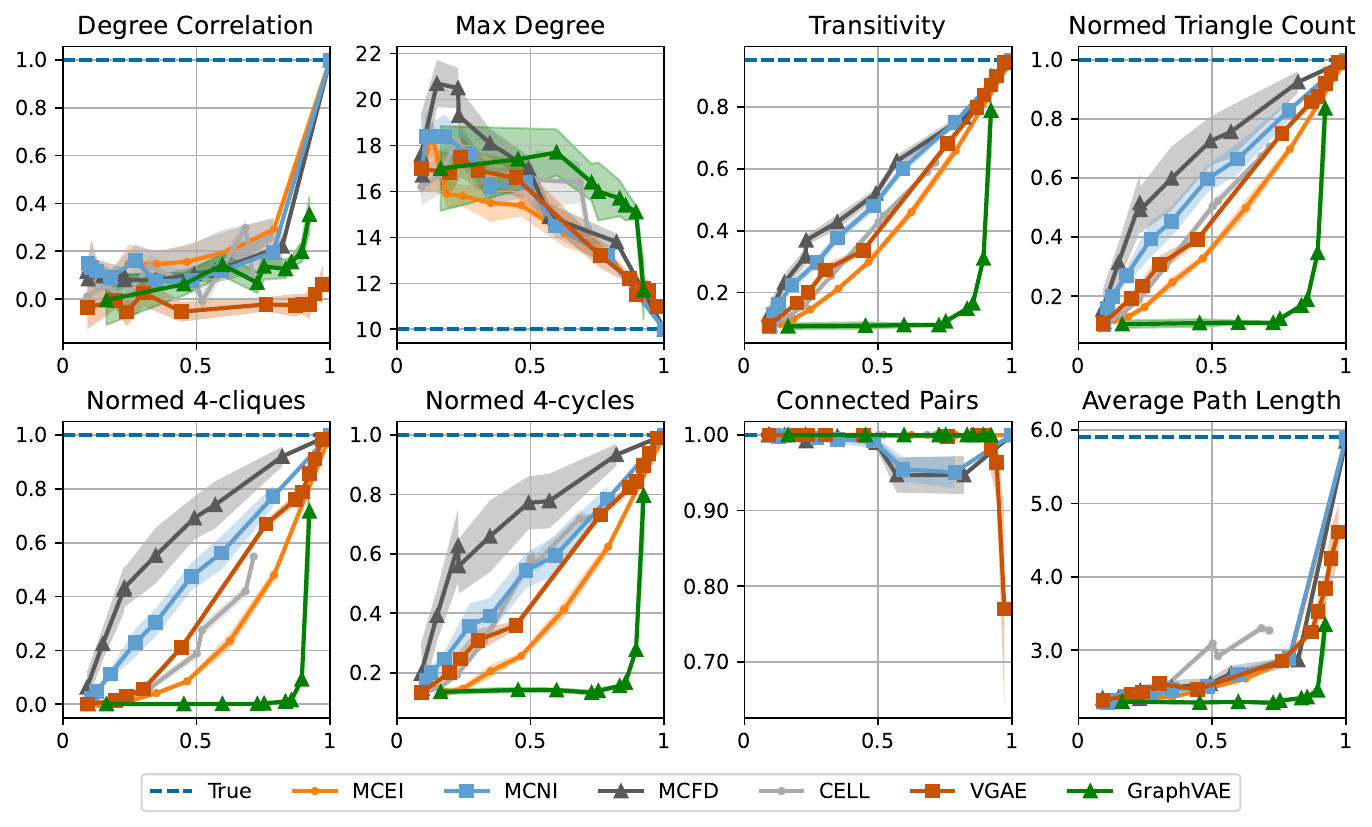}
\caption{Statistics for the \textsc{Ring of cliques} graph. All nodes are incident to the same number of triangles, so correlation is not defined for this graph. We replace it with transitivity.}
\label{ring_metrics}
\end{figure}

\end{document}